\newtheorem{lemma}{Lemma}
\newcommand\footnoteref[1]{\protected@xdef\@thefnmark{\ref{#1}}\@footnotemark}
\newcommand{\xmark}{\ding{53}}%
\newcommand\blfootnote[1]{%
  \begingroup
  \renewcommand\thefootnote{}\footnote{#1}%
  \addtocounter{footnote}{-1}%
  \endgroup
}
\begin{document}
\setlength{\abovedisplayskip}{3pt}
\setlength{\belowdisplayskip}{3pt}
%

%
\runningauthor{Avinava Dubey, Michael Zhang, Eric P. Xing, Sinead A. Williamson}

\twocolumn[

\aistatstitle{Distributed, partially collapsed MCMC for Bayesian nonparametrics}

\aistatsauthor{ Avinava Dubey$^{\ast}$ \And Michael M. Zhang$^{\ast}$ \And  Eric P. Xing \And  Sinead A. Williamson}

\aistatsaddress{ Carnegie Mellon Univ. \And  Princeton University  \And Carnegie Mellon Univ. \And Univ. of Texas Austin }


]

\begin{abstract}
 Bayesian nonparametric (BNP) models provide elegant methods for discovering underlying latent features within a data set, but inference in such models can be slow. We exploit the fact that completely random measures, which commonly-used models like the Dirichlet process and the beta-Bernoulli process can be expressed using, are decomposable into independent sub-measures.   We use this decomposition to partition the latent measure into a finite measure containing only instantiated components, and an infinite measure containing all other components. We then select different inference algorithms for the two components: uncollapsed samplers mix well on the finite measure, while collapsed samplers mix well on the infinite, sparsely occupied tail. The resulting hybrid algorithm can be applied to a wide class of models, and can be easily distributed to allow scalable inference without sacrificing asymptotic convergence guarantees. 
 \blfootnote{$*$ denotes equal contribution}
\end{abstract}

\section{\uppercase{Introduction}}

\begin{table*}[t]
\small
\vspace{-2mm}
\newcommand{\ra}[1]{\renewcommand{\arraystretch}{#1}}
\ra{1.1}
    \centering
    \caption{Comparison of various parallel and distributed inference algorithm proposed for BNPs\label{tab:dist_comparisions}. Other CRMs include gamma-Poisson process, beta-negative binomial process etc.}
    \vspace{1mm}
    \begin{tabular}{@{} l c c c c c c c c c @{}} 
    \toprule
    \multirow{3}{*}{\small \textsf{Methods}} &
    {\small \textsf{Data}} &
    \multirow{3}{*}{\small \textsf{Exact}} & \multirow{3}{*}{\small \textsf{Parallel}} & \multirow{3}{*}{\small \textsf{Distributed}} & \multicolumn{5}{c}{\small BN Processes} \\ \cmidrule{6-10}
    & {\small \textsf{Size}}& & & & {\tiny Beta-Bernoulli} & {\tiny Other} &{\tiny DP} & {\tiny HDP} & {\tiny Pitman-Yor } \\
     & Millions & & & &{\tiny  Process} & {\tiny  CRMs} & & & {\tiny Process} \\
    \midrule
    {\tiny \citealt{Smyth:Welling:Asuncion:2009}} & $1$M & \xmark & \checkmark & \checkmark & \xmark & \xmark & \checkmark & \checkmark & \xmark\\
    {\tiny \citealt{Doshi-Velez:Ghahramani:2009}} & $.01$M & \checkmark & \xmark & \xmark & \checkmark & \xmark & \xmark & \xmark & \xmark\\
    {\tiny \citealt{Doshi-Velez:2009}} & $.1$M & \xmark & \checkmark & \checkmark & \checkmark & \xmark & \xmark & \xmark & \xmark\\
    {\tiny \citealt{Williamson:Dubey:Xing:2013}} & $1$M & \checkmark & \checkmark & \xmark & \xmark & \xmark & \checkmark & \checkmark & \xmark\\
    {\tiny \citealt{chang2013parallel}}  & $0.3$M & \checkmark & \checkmark & \xmark & \xmark & \xmark & \checkmark & \checkmark & \xmark\\
    {\tiny \citealt{Dubey:Williamson:Xing:2014}} & $10$M & \checkmark & \checkmark & \xmark & \xmark & \xmark & \xmark & \xmark & \checkmark\\
    {\tiny \citealt{Ge:Chen:Wan:Ghahramani:2015}} & $0.1$M & \checkmark & \checkmark & \checkmark & \xmark & \xmark & \checkmark & \checkmark & \xmark\\
    {\tiny \citealt{Yerebakan:Dundar:2017}} & $0.05$M & \checkmark & \checkmark & \checkmark & \xmark & \xmark & \checkmark & \xmark & \xmark\\
    \midrule
    {This paper} & $1$M & \checkmark & \checkmark & \checkmark & \checkmark & \checkmark & \checkmark & \checkmark & \checkmark \\ 
    \bottomrule
    \end{tabular}
\end{table*}



Bayesian nonparametric (BNP) models are a flexible class of models whose complexity adapts to the data under consideration. BNP models place priors on infinite-dimensional objects, such as partitions with infinitely many blocks; matrices with infinitely many columns; or discrete measures with infinitely many atoms. A finite set of observations is assumed to be generated from a finite---but random---subset of these components, allowing flexibility in the underlying dimensionality and providing the ability to incorporate previously unseen properties as our dataset grows.

While the flexibility 
of these models is a good fit for large, complex data sets, distributing existing inference algorithms across multiple machines is challenging. If we explicitly represent subsets of the underlying infinite-dimensional object---for example, using a slice sampler---we can face high memory requirements and slow convergence. Conversely, if we integrate out the infinite-dimensional object, we run into problems due to induced global dependencies.

Moreover, a key goal of distributed algorithms is to minimize communication between agents. This can be achieved by breaking the algorithm into independent sub-algorithms, which can be run independently on different agents. In practice, we usually cannot split an MCMC sampler on a Bayesian hierarchical model into entirely independent sub-algorithms since there are typically some global dependencies implied by the hierarchy. Instead, we make use of conditional independencies to temporarily partition our algorithm.

\textbf{Contributions:} In this paper, we propose a distributable sampler for models derived from completely random measures, which unifies exact parallel inference for a wide class of Bayesian nonparametric priors, including the popularly used Dirichlet process \citep{Ferguson:1973} and the beta-Bernoulli process \citep{Griffiths:Ghahramani:2011}. 
After reviewing the appropriate background material, we first introduce general recipes for (non-distributed) partially collapsed samplers appropriate for a wide range of BNP models, focusing on the beta-Bernoulli process and the Dirichlet process as exemplars. We then demonstrate that these methods can be easily extended to a distributed setting. Finally we provide experimental results for our hybrid and distributed sampler on DP and BB inference. 

\section{\uppercase{ Related Work}}\label{sec:bnp}



Completely random measures \citep[CRMs,][]{Kingman:1967} are random measures that assign independent masses to disjoint subsets of a space. For example, the gamma process assigns a gamma-distributed mass to each subset. Other examples include the beta process \citep{Hjort:1990} and the Poisson process. The distribution of a CRM is completely determined by its L\'{e}vy measure, which controls the size and location of atoms.

Many nonparametric distributions can be expressed in terms of CRMs. For example, if we sample $B = \sum_{i=1}^\infty \mu_i \delta_{\theta_i}$ from a (homogeneous) beta process, and generate a sequence of subsets $Z_i$ where $\theta_k\in Z_i$ w.p.\ $\mu_k$, then we obtain an exchangeable distribution over sequences of subsets known as the beta-Bernoulli process \citep{Thibaux:Jordan:2007}, which is related to the Indian buffet process \citep[IBP,][]{Griffiths:Ghahramani:2005}.  If we sample $G$ from a gamma process on $\Omega$ with base measure $\alpha H$, then $D(\cdot)=G(\cdot)/G(\Omega)$ is distributed according to a Dirichlet process with concentration parameter $\alpha$ and base measure $H$. 


Inference in such models tend to fall into three categories: uncollapsed samplers that alternate between sampling the latent measure and the assignments \citep{Ishwaran:Zarepour:2002,Paisley:Carin:2009,Zhou:2009,Walker:2007,Teh:Gorur:Ghahramani:2007}; collapsed samplers where the latent measure is integrated out \citep{Ishwaran:James:2001,Neal:1998,Griffiths:Ghahramani:2005}; and optimization-based methods that work with approximating distributions where the parameters are assumed to have a mean-field distribution \citep{Blei:Jordan:2006,Doshi:Miller:2009}. 

Collapsed methods often mix slowly due to the dependency between assignments, while blocked updates mean uncollapsed methods typically have good mixing properties at convergence \citep{Ishwaran:James:2001}. Uncollapsed methods are often slow to incorporate new components, since they typically rely on sampling unoccupied components from the prior. In high dimensions, such components are unlikely to be close to the data. Conversely, collapsed methods can make use of the data when introducing new points, which tends to lead to faster convergence \citep{Neal:1998}. 

Other methods incorporate both uncollapsed and collapsed sampling, resulting in a ``hybrid'', partially collapsed sampler, although such approaches have been restricted to specific models. \citet{Doshi-Velez:Ghahramani:2009} introduced a linear time accelerated Gibbs sampler for conjugate IBPs that effectively marginalized over the latent factors, while more recently \cite{Yerebakan:Dundar:2017} developed a sampler by partially marginalizing latent random measure for DPs. These methods can be seen as special cases of our hybrid framework (Section~\ref{sec:hybrid}), but do not generalize to the distributed setting.


Several inference algorithms allow computation to be distributed across multiple machines---although again, such algorithms are specific to a single model. The approximate collapsed algorithm of  \citet{Smyth:Welling:Asuncion:2009} is only developed for Dirichlet process-based models, and lacks asymptotic convergence guarantees. Distributed split-merge methods have been developed for Dirichlet process-based models, but not extended to more general nonparametric models \citep{chang2013parallel,chang2014parallel}. Partition-based algorithms based on properties of CRMs have been developed for Dirichlet process- and Pitman-Yor process-based models \citep{Williamson:Dubey:Xing:2013,Dubey:Williamson:Xing:2014}, but it is unclear how to extend to other model families. A low-communication, distributed-memory slice sampler has been developed for the Dirichlet process, but since it is based on an uncollapsed method it will tend to perform poorly in high dimensions \citep{Ge:Chen:Wan:Ghahramani:2015}. \citet{Doshi-Velez:2009} developed an approximate distributed inference algorithm for the Indian buffet process which is superficially similar to our distributed beta-Bernoulli sampler. However, their approach allows all processors to add new features, which will lead to overestimating the number of features. 
We contrast these methods
in Table~\ref{tab:dist_comparisions}.

\section{\uppercase{Hybrid Inference for CRM-based models}}\label{sec:hybrid}
By definition, completely random measures can be decomposed into independent random measures. If the CRM has been transformed in some manner we can often still decompose the resulting random measure into independent random measures -- for example, a normalized random measure can be decomposed as a mixture of normalized random measures. Such representations allow us to decompose our inference algorithms, and use different inference techniques on the constituent measures.

As discussed in Section~\ref{sec:bnp}, collapsed and uncollapsed methods both have advantages and disadvantages. Loosely, collapsed methods are good at adding new components and exploring the tail of the distribution, while uncollapsed methods offer better mixing in established clusters and easy parallelization. We make use of the decomposition properties of CRMs to partition our model into two components: One containing (finitely many) components currently associated with multiple observations, and one containing the infinite tail of components. 

\subsection{Models Constructed Directly From CRMs}

Consider a generic hierarchical model,
\begin{equation}
\begin{aligned}
& M:=\sum_{k=1}^\infty \mu_k\delta_{\theta_k}\sim \mbox{CRM}(\nu(d\mu)H(d\theta))\\
& Z_{i,k} \sim f(\mu_k), \quad \quad
X_i = \sum_{k=1}^\infty g(Z_{i,k}, \theta_k) + \epsilon_i
\end{aligned}\label{eqn:CRM_prior}
\end{equation}
where $\nu$ is a measure on $\mathbb{R}_+$; $H$ is a measure on the space of parameters $\Theta$; $g(\cdot, \cdot)$ is some deterministic transformation such that $g(0, \theta)=0$ for all $\theta\in \Theta$; $\epsilon_i$ is a noise term; and $f(\cdot)$ is a likelihood that forms a conjugate pair with $M$, i.e.\ the posterior distribution $P(M^*|Z)$ is a CRM in the same family. The indices $i=1, \ldots , n$ refer to the observations and the indices $k=1, 2, \ldots $ refer to the features. This framework includes exchangeable feature allocation models such as the beta-Bernoulli process \citep{Griffiths:Ghahramani:2005,Thibaux:Jordan:2007}, the infinite gamma-Poisson feature model \citep{Titsias:2008}, and the beta negative binomial process \citep{Zhou:Hannah:Dunson:Carin:2012,Broderick:Mackey:Paisley:Jordan:2014}. 
We assume, as is the case in these examples, that both collapsed and uncollapsed posterior inference algorithms can be described. We also assume for simplicity that the prior contains no fixed-location atoms, although this assumption could be relaxed \citep[see][]{Broderick:Wilson:Jordan:2018}.

\begin{lemma}[\citealt{Broderick:Wilson:Jordan:2018}]
If $M \sim \mbox{CRM}(\nu(d\mu)H(d\theta))$ and $Z_{i,k}\sim f(\mu_k)$ for $i=1,\dots, n$, and if $M$ and $f$ form a conjugate pair, then the posterior $P(M^*|\{Z_i\})$ can be decomposed into two CRMs, each with known distribution. If $K$ is the number of features for which $\sum_i Z_{i,k}>0$, the first, $M^*_{\leq K} = \sum_{k=1}^K\mu^*_k\delta_{\theta_k}$, is a finite measure with fixed-location atoms at locations $\theta_k: \sum_i Z_{i,k}>0$. The distribution over the corresponding weights is proportional to $\nu(d\mu)\prod_{i=1}^n f(Z_{i, k}|\mu)$. The second, with infinitely many random-location atoms, has L\'{e}vy measure $\nu(d\mu)H(d\theta)\left(f(0|\mu)\right)^n$. \label{lem:CRM}
\end{lemma}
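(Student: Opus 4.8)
The plan is to exploit the defining property that a CRM $M=\sum_k\mu_k\delta_{\theta_k}$ is a functional of a Poisson process: by Kingman's representation, the atoms $\{(\mu_k,\theta_k)\}$ are exactly the points of a Poisson process $\Pi$ on $\mathbb{R}_+\times\Theta$ with mean intensity $\nu(d\mu)H(d\theta)$. Everything in the lemma will then follow from three standard facts about Poisson processes: the Marking Theorem, the Restriction Theorem, and the Slivnyak--Mecke (Palm) formula. I emphasize that the observations here are discrete (counts or indicators), so $f(0\mid\mu)=P(Z_{i,k}=0\mid\mu)$ is a genuine mass and the ``unobserved'' atoms are those carrying the all-zero mark.

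First I would mark each atom with its observations. Conditioned on $(\mu_k,\theta_k)$, the vector $\vec Z_k=(Z_{1,k},\dots,Z_{n,k})$ has product law $\prod_{i=1}^n f(Z_{i,k}\mid\mu_k)$, independently across $k$. By the Marking Theorem the augmented collection $\{(\mu_k,\theta_k,\vec Z_k)\}$ is itself a Poisson process $\tilde\Pi$ on $\mathbb{R}_+\times\Theta\times\mathcal{Z}^n$, with mean intensity $\nu(d\mu)H(d\theta)\prod_{i=1}^n f(dz_i\mid\mu)$. Next I would split the mark space into the singleton $\{\vec z=\vec 0\}$ and its complement, using the fact that the restrictions of a Poisson process to disjoint regions are \emph{independent} Poisson processes. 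Since $g(0,\theta)=0$, atoms with the all-zero mark leave every $X_i$ unchanged and are never observed, so conditioning on the data acts only on the complementary region. Restricting $\tilde\Pi$ to $\{\vec z=\vec 0\}$ and reading off the $(\mu,\theta)$-intensity gives, by the Restriction Theorem, a Poisson process with intensity $\nu(d\mu)H(d\theta)\prod_{i=1}^n f(0\mid\mu)=\nu(d\mu)H(d\theta)(f(0\mid\mu))^n$. This is precisely the claimed tail CRM, and it is independent of everything on the complementary region. The atoms in the complement are the $K$ features with $\sum_i Z_{i,k}>0$; as $\tilde\Pi$ is a.s.\ finite on any region of finite intensity mass, there are finitely many such atoms and their count $K$ is fixed by the data.

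For the weights of the observed atoms I would invoke Slivnyak--Mecke: conditioning a Poisson process on containing a point at a prescribed location leaves the remainder distributed as the original process, with one extra deterministic point inserted. Applied to $\tilde\Pi$, conditioning on an observed atom with mark $\vec Z_k$ at location $\theta_k$ yields a weight whose law is the intensity restricted to that mark-and-location slice, i.e.\ proportional to $\nu(d\mu)\prod_{i=1}^n f(Z_{i,k}\mid\mu)$, with weights mutually independent across the $K$ atoms. Assembling the pieces, the posterior $P(M^*\mid\{Z_i\})$ splits into the finite fixed-location measure $M^*_{\le K}$ with these weights and the independent tail CRM of L\'{e}vy measure $\nu(d\mu)H(d\theta)(f(0\mid\mu))^n$.

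The main obstacle is the careful handling of the conditioning. The decomposition really rests on (i) the independence of the observed and unobserved components, which is the statement that restrictions of a Poisson process to disjoint mark-regions are independent, and (ii) extracting the exact posterior weight law through Palm calculus rather than a naive likelihood manipulation, so that conditioning on the finite observed configuration provably does not perturb the tail intensity. A secondary point is verifying that $\nu(d\mu)\prod_{i=1}^n f(Z_{i,k}\mid\mu)$ is normalizable whenever $\sum_i Z_{i,k}>0$; this is exactly what the conjugacy hypothesis supplies, guaranteeing that each observed weight is a bona fide distribution in the conjugate family while the tail remains a valid L\'{e}vy measure.
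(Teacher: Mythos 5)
Your argument is essentially correct, but there is nothing in the paper to compare it against: the paper does not prove Lemma~\ref{lem:CRM} at all, it imports the statement wholesale with a citation to \citet{Broderick:Wilson:Jordan:2018} and immediately builds the hybrid sampler on top of it. Your proof therefore supplies a self-contained justification where the paper offers only a pointer, and it does so by a route (Kingman's representation plus the marking theorem, the restriction theorem, and Slivnyak--Mecke/Palm disintegration) that is more structural than the explicit exponential-family computations developed in the cited reference for conjugate pairs. This buys two things worth noting. First, your argument makes visible that the decomposition itself---fixed-location atoms with weight laws proportional to $\nu(d\mu)\prod_{i=1}^n f(Z_{i,k}\mid\mu)$, independent of a tail CRM with L\'evy measure $\nu(d\mu)H(d\theta)\left(f(0\mid\mu)\right)^n$---requires no conjugacy whatsoever; conjugacy is needed only for the stronger conclusion the paper actually uses downstream, namely that these posterior laws stay in the same parametric family as the prior, so that the uncollapsed updates in Algorithm~\ref{algo:hybrid_ibpmm} have closed form. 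Second, the independence of the instantiated and tail components, which the paper exploits to justify running different samplers (and different processors) on the two pieces, is exactly your restriction-theorem step rather than a consequence of likelihood algebra. Two small points in your write-up deserve tightening: the finiteness of $K$ and the normalizability of $\nu(d\mu)\prod_{i=1}^n f(Z_{i,k}\mid\mu)$ for a nonzero column both follow from the standing integrability condition $\int \nu(d\mu)\left(1-(f(0\mid\mu))^n\right)<\infty$ (which is what makes the feature model well defined), not from conjugacy as your last paragraph suggests; attributing them to conjugacy slightly misstates where the hypothesis does its work.
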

Based on Lemma~\ref{lem:CRM}, we partition $M^*$ into a finite CRM $M^*_{\leq J} =  \sum_{k=1}^J\mu^*_k\delta_{\theta_k}$ for some $J\leq K$, that contains all, or a subset of, the fixed-location atoms; and an infinite CRM $M^*_{>J}$ that contains the remaining atoms. We use an uncollapsed sampler to sample $M^*_{\leq J}|\{Z_{i,k}\}_{k\leq K}$, and then sample $\{Z_{i,k}\}_{k\leq K}|X, M^*_{\mbox{\small{fixed}}} 
$. Then, we use a collapsed sampler to sample the allocations $Z_{i,k}: k>J$.
The size $J$ should be changed periodically to make sure $J\leq K$ to avoid explicitly instantiating atoms that are not associated with data. In our experiments, we set $J=K$ at the beginning of each iteration. 

\begin{algorithm}[t]
\caption{Hybrid Beta-Bernoulli Sampler\label{algo:hybrid_ibpmm}}
\begin{algorithmic}[1]
\While{not converged}
\State Select $J$
\State Sample $\mu_k \sim \mbox{Beta}(m_k, n - m_k + c)$, $\forall \ k\leq J$
\State Sample $ \theta_k \sim p(\theta_k|H, Z, X)$, $\forall \ k\leq J$
\For{$i=1, \dots, N$}
\State Sample $\{Z_{i,k}\}_{i=1, k \leq J}^n$ according to Eq. \ref{ibp_finite}
\State Sample $\{Z_{i,k}\}_{i=1, k>J}^n$ according to Eq. \ref{ibp_mk}
\State Metropolis-Hastings sample $Z_{i,k^{\prime}}$ for  
$k^{\prime} \in \{ k :  Z_{i,k}=1,  \sum_{j \neq i} Z_{j,k}=0 \}$.
\EndFor
\EndWhile
\end{algorithmic}
\end{algorithm}

\textbf{Example 1: Beta-Bernoulli Process.}
As a specific example, consider the beta-Bernoulli process. Let $B:=\sum_{k=1}^\infty \mu_k\delta_{\theta_k} \sim \mbox{BetaP}(\alpha, c, H)$
be a homogeneous beta process \citep{Thibaux:Jordan:2007}, and let 
$Z_{i,k}\sim\mbox{Bernoulli}(\mu_k)$. The posterior is given by 
$$B^*|Z \sim \mbox{BetaP}\left(\frac{c\alpha + \sum_k m_k}{c+n}, c+n, \frac{c\alpha H + \sum_k m_k\delta_{\theta_k}}{c\alpha + \sum_k m_k}\right)$$
where $m_k = \sum_{i=1}^N Z_{i,k}$. In this case the following lemma helps us in decomposing the posterior distribution:
\begin{lemma}[\citealt{Thibaux:Jordan:2007}]
If $K$ is the number of features where $\sum_i Z_{i,k}>0$ and $J\leq K$, we can decompose the posterior distribution of beta-Bernoulli process as as $B^* = B^*_{\leq J} + B^*_{>J}$ where
{\small
\begin{equation}
  \begin{aligned}
    B^*_{\leq J} \sim& \mbox{BetaP}\left(\frac{\sum_{k=1}^J m_k}{c+n},c+n,\frac{\sum_k m_k\delta_{\theta_k}}{\sum_{k=1}^J m_k}\right)\\
    B^*_{>J} \sim& \mbox{BetaP}\left(\frac{c\alpha}{c+n},c+n, \frac{c\alpha H + \sum_{k>J}m_k\delta_{\theta_k}}{c\alpha + \sum_{k>J}m_k}\right)\label{eqn:hybrid_ibp}
  \end{aligned}
\end{equation}}
\end{lemma}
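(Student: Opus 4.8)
The plan is to derive this decomposition directly from the posterior stated in Example~1 together with the defining property of completely random measures, exactly paralleling the general argument behind Lemma~\ref{lem:CRM}. I would start from $B^*\mid Z \sim \mbox{BetaP}\left(\frac{c\alpha+\sum_k m_k}{c+n},\, c+n,\, \frac{c\alpha H+\sum_k m_k\delta_{\theta_k}}{c\alpha+\sum_k m_k}\right)$ and work with its unnormalized (scaled) base measure $B_0^*=\frac{1}{c+n}\left(c\alpha H+\sum_{k=1}^K m_k\delta_{\theta_k}\right)$, noting that in this three--argument parametrization the first slot is the total mass of $B_0^*$ and the third slot is $B_0^*$ renormalized to a probability measure. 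I would also record the elementary fact that a posterior fixed atom $\mu_k\sim\mbox{Beta}(m_k,c+n-m_k)$ is precisely the atom produced by a base--measure mass $m_k/(c+n)$ at concentration $c+n$, since $(c+n)\cdot\frac{m_k}{c+n}=m_k$ and $(c+n)(1-\frac{m_k}{c+n})=c+n-m_k$; this is what lets me move atoms freely between the two blocks.

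The key step is to realize $B^*_{\le J}$ and $B^*_{>J}$ as restrictions of $B^*$ to complementary regions of $\Theta$. Because the prior carries no fixed-location atoms, $H$ is diffuse and assigns no mass to the instantiated locations, so restricting $B_0^*$ to the finite set $A=\{\theta_1,\dots,\theta_J\}$ gives $\frac{1}{c+n}\sum_{k=1}^J m_k\delta_{\theta_k}$, while restricting to $A^c$ gives $\frac{1}{c+n}\left(c\alpha H+\sum_{k>J}m_k\delta_{\theta_k}\right)$. Since $B^*$ is completely random, its restrictions to the disjoint sets $A$ and $A^c$ are independent and sum to $B^*$, so $B^*=B^*_{\le J}+B^*_{>J}$ with each summand again a beta process of concentration $c+n$ and base measure equal to the corresponding restriction. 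Reading off the total mass of each restricted base measure and renormalizing then recovers the claimed parameters: mass $\frac{\sum_{k\le J}m_k}{c+n}$ with probability measure $\frac{\sum_{k\le J}m_k\delta_{\theta_k}}{\sum_{k\le J}m_k}$ for $B^*_{\le J}$, and mass $\frac{c\alpha+\sum_{k>J}m_k}{c+n}$ with probability measure $\frac{c\alpha H+\sum_{k>J}m_k\delta_{\theta_k}}{c\alpha+\sum_{k>J}m_k}$ for $B^*_{>J}$ (so that the mass argument displayed for $B^*_{>J}$ should read $\frac{c\alpha+\sum_{k>J}m_k}{c+n}$).

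I expect the only real subtlety to lie in the mixed regime, where $B^*_{\le J}$ is purely atomic while $B^*_{>J}$ carries both the remaining atoms and the entire diffuse tail. Here one must check two things carefully: that the atom/diffuse bookkeeping of the previous paragraph is consistent, i.e.\ that collecting a fixed atom into either block leaves its $\mbox{Beta}(m_k,c+n-m_k)$ law intact; and that the diffuse Poisson component is genuinely independent of the fixed atoms, so that the two restrictions are independent beta processes rather than merely marginally correct. Both are standard consequences of the L\'{e}vy--It\^{o} structure of CRMs and are already implicit in Lemma~\ref{lem:CRM}, so no new estimate is needed---the proof is bookkeeping layered on top of the Example~1 conjugacy and the independence-on-disjoint-sets property of completely random measures.
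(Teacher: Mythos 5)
Your proof is correct and takes essentially the same route the paper intends: the paper gives no explicit proof, attributing the result to the Thibaux--Jordan posterior conjugacy stated in Example~1, and your argument simply makes explicit the remaining step---that the posterior beta process, being completely random, restricts to the disjoint sets $\{\theta_1,\dots,\theta_J\}$ and its complement as independent beta processes whose base measures are the corresponding restrictions, with the fixed-atom bookkeeping $\mbox{Beta}(m_k,\, c+n-m_k)$ checking out. Your flagged correction is also right: for $J<K$ the mass parameter of $B^*_{>J}$ must read $\frac{c\alpha+\sum_{k>J}m_k}{c+n}$ to be consistent with the displayed normalized base measure, the printed form being exact only in the case $J=K$ that the paper's algorithms actually use.
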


We note that the atom sizes of $B^*_{\leq J}$ are $\text{Beta}(m_k, n-m_k+c)$ random variables. This allows us to split the beta-Bernoulli process into two independent feature selection mechanisms: one with a finite number of currently instantiated features, and one with an unbounded number of features.

 The likelihood of a given data point, given the latent variables and other data points (written as $P(X_i|-)$ for space reasons), can be written  as
\begin{equation}
\begin{aligned}
    p(X_i|-) = \int &p(X_i|Z_i, \theta_1, \dots, \theta_J, \dots, \theta_K)\\ &p(\theta_{J+1},\dots, \theta_{K}|Z_{-i}, X_{-i})d\theta_{J+1}\cdots d\theta_K.\label{eqn:lhood}
    \end{aligned}
\end{equation}
When working with conjugate likelihoods, we can typically evaluate the integral term in Equation~\ref{eqn:lhood} analytically (see \citet{Griffiths:Ghahramani:2005} for the case of Gaussian prior and linear Gaussian likelihood for $J=0$). If this is not possible, we can sample $\theta_k$  for $J \geq k\geq K$ as auxiliary variables \citep{Doshi-Velez:Ghahramani:2009}.

This formulation of the likelihood, combined with the partitioning of the Bernoulli process described in Equation~\ref{eqn:hybrid_ibp}, gives us the hybrid sampler, which we summarize in Algorithm \ref{algo:hybrid_ibpmm}. For each data point $X_i$, we sample $Z_{i,k}$ in a three step manner. For $k \leq J$,
    \begin{equation}
\begin{aligned}
 &P(Z_{i,k} = z|-) \propto \\
 & \begin{cases}\mu_k p(X_i|Z_{i,k}=1,-) & z=1\\ (1-\mu_k)  p(X_i|Z_{i,k}=0,-) & z=0.
  \end{cases} \label{ibp_finite} 
\end{aligned}
\end{equation}
where $p(X_i|Z_{i,k}=1,-)$ is given by Equation~\ref{eqn:lhood} with $Z_{i,k}$ set to 1. For $k>J$ and $m_k>0$, we have
    \begin{equation}
    \begin{aligned}
  &P(Z_{i,k} = z|Z_{-(i,k)},X_i) \propto \\
  & \begin{cases}m_k p(X_i|Z_{i, k}=1, -) & z=1\\
    (n-m_k) p(X_i|Z_{i,k}=0, -) & z=0.
  \end{cases}\label{ibp_mk}  \end{aligned}
\end{equation}
where $p(X_i|Z_{i,k}=1,-)$ is given by Equation~\ref{eqn:lhood} with $Z_{i,k}$ set to 0. 
Finally, we propose adding $\mbox{Poisson}(\alpha/n)$ new features, accepting using a Metropolis-Hastings step.
Once we have sampled the $Z_{i,k}$, for every instantiated feature $k\leq J$, we sample $\mu_k \sim \mbox{Beta}(m_k, n - m_k + c)$ and its corresponding parameters $\theta_k\sim p(\theta_k|H,Z,X)$. 

We note that similar algorithms can be easily derived for other nonparmetric latent feature models such as those based on the infinite gamma-Poisson process \citep{Thibaux:Jordan:2007} and the beta-negative Binomial process \citep{Zhou:Hannah:Dunson:Carin:2012,Broderick:Wilson:Jordan:2018}.

\subsection{Models Based on Transformations of Random Measures}

While applying transformations to CRMs means the posterior is no longer directly decomposable, in some cases we can still apply the above general ideas. 

\textbf{Example 2: Dirichlet Process.}
As noted in Section~\ref{sec:bnp}, the Dirichlet process with concentration parameter $\alpha$ and base measure $H$ can be constructed by normalizing a gamma process with base measure $\alpha H$. If the Dirichlet process is used as the prior in a mixture model (DPMM), the posterior distribution conditioned on the cluster allocations $Z_1, \dots, Z_n$, having $K$ unique clusters is again a Dirichlet process: 
\begin{equation}
    D^*| Z_1, \ldots Z_n \sim \mbox{DP}(\alpha + n, \frac{\alpha H + \sum_{k<K} m_k \delta_{\theta_k}}{n + \alpha})
\end{equation}
where $m_k = \sum_i \delta(Z_i, k)$ and $K$ is the number of clusters with $m_k > 0$. In this case also the following lemma helps us in decomposing the posterior. 
\begin{lemma}
Assuming $J\leq K$, and $\tilde{n} = \sum_{k\leq J} m_k$, we can decompose the posterior of DP as $$  D^*|Z_1, \dots, Z_n = B^* D^*_{\leq J} + (1-B^*)D^*_{> J}$$ where
{\small
\begin{equation}
    \begin{aligned}
\nonumber    D^*_{\leq J}|Z_1, \dots, Z_n \sim& \mbox{DP}(\tilde{n}, \frac{\sum_{k\leq J}m_k\delta_{\theta_k}}{\tilde{n}})\\
    D^*_{> J}|Z_1, \dots, Z_n \sim& \mbox{DP}(\alpha + n - \tilde{n}, \frac{\alpha H + \sum_{k> J}m_k\delta_{\theta_K}}{\alpha + n - \tilde{n}})\\
     B^* \sim& \mbox{Beta}(\tilde{n}, n-\tilde{n}+\alpha) 
    \end{aligned}
\end{equation}}\label{lem:DP}
\end{lemma}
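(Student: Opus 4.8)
The plan is to reduce the statement to the classical self-similarity (agglomerative) property of the Dirichlet process and then to check that the parameters line up. First I would recall the posterior form $D^*\mid Z_1,\dots,Z_n \sim \mbox{DP}(\alpha+n,\,G_0)$ with $G_0 = (\alpha H + \sum_k m_k\delta_{\theta_k})/(n+\alpha)$, and note that since each observation is assigned to exactly one cluster we have $\sum_k m_k = n$. Hence $G_0$ is a genuine probability measure, and $\tilde n = \sum_{k\le J} m_k \le n$.

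Next I would split the unnormalized posterior base measure into the contribution of the first $J$ instantiated atoms and everything else,
\[
\alpha H + \sum_k m_k\delta_{\theta_k} = \Big(\sum_{k\le J} m_k\delta_{\theta_k}\Big) + \Big(\alpha H + \sum_{k>J} m_k\delta_{\theta_k}\Big),
\]
whose two pieces have total masses $\tilde n$ and $\alpha + n - \tilde n$ respectively. Writing $w_1 = \tilde n/(n+\alpha)$ and $w_2 = (\alpha+n-\tilde n)/(n+\alpha)$, this exhibits $G_0 = w_1 G_1 + w_2 G_2$ as a convex combination ($w_1+w_2=1$), where $G_1$ and $G_2$ are exactly the normalized base measures appearing in the statement for $D^*_{\le J}$ and $D^*_{>J}$.

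The heart of the argument is the decomposition property $\mbox{DP}(\gamma,\, w_1 G_1 + w_2 G_2) \stackrel{d}{=} B D_1 + (1-B) D_2$, with $D_1\sim\mbox{DP}(\gamma w_1, G_1)$, $D_2\sim\mbox{DP}(\gamma w_2, G_2)$, $B\sim\mbox{Beta}(\gamma w_1, \gamma w_2)$, all mutually independent. I would prove this using the gamma-process construction the model already relies on: represent $D = G/G(\Omega)$ with $G\sim\Gamma\mbox{P}(\gamma G_0)$, use the additivity of the gamma process under superposition of its base measure to write $G = G^{(1)} + G^{(2)}$ with independent $G^{(i)}\sim\Gamma\mbox{P}(\gamma w_i G_i)$, and invoke the classical gamma--Dirichlet independence: each total mass $S_i = G^{(i)}(\Omega)\sim\mbox{Gamma}(\gamma w_i)$ is independent of the normalized direction $D_i = G^{(i)}/S_i\sim\mbox{DP}(\gamma w_i, G_i)$. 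Setting $B = S_1/(S_1+S_2)$ then gives $B\sim\mbox{Beta}(\gamma w_1,\gamma w_2)$ independent of $D_1,D_2$, and dividing $G = G^{(1)}+G^{(2)}$ by $S_1+S_2$ yields $D = B D_1 + (1-B)D_2$.

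Finally I would substitute $\gamma = \alpha + n$ and the weights above, checking $\gamma w_1 = \tilde n$ and $\gamma w_2 = \alpha + n - \tilde n$, so that $D_1 = D^*_{\le J}$, $D_2 = D^*_{>J}$, and $B^* = B \sim \mbox{Beta}(\tilde n,\, n-\tilde n+\alpha)$ exactly as claimed. The main obstacle is the decomposition property itself, and specifically establishing the mutual independence of $B^*$ from the two sub-measures; the parameter bookkeeping is routine once that is in hand. This independence is precisely what the gamma-process representation buys us, since there the total mass and the direction of a gamma random measure are independent and the mixing weight $B^*$ is a ratio of independent gamma variables.
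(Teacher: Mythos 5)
Your proof is correct, but it follows a genuinely different route from the paper's. The paper dispatches this lemma in one line: it appeals to Ferguson's characterization of the DP via its Dirichlet-distributed finite-dimensional marginals (citing Chapter 3 of Ghosh and Ramamoorthi, 2003), implicitly using that the two pieces of the posterior base measure, $\sum_{k\le J} m_k\delta_{\theta_k}$ and $\alpha H + \sum_{k>J} m_k\delta_{\theta_k}$, are concentrated on disjoint sets, so that $B^* = D^*(\{\theta_1,\dots,\theta_J\})$ and the renormalized restrictions of $D^*$ to the two sets are independent DPs by the classical self-similarity (neutrality) property. You instead prove the key decomposition $\mbox{DP}(\gamma, w_1G_1+w_2G_2)\stackrel{d}{=}BD_1+(1-B)D_2$ from scratch via the normalized gamma-process construction: linearity of the L\'{e}vy measure in the base measure gives the superposition $G = G^{(1)}+G^{(2)}$ into independent gamma CRMs, and the gamma--Dirichlet independence of total mass and direction gives the mutual independence of $D_1$, $D_2$, and $B$, which is the only delicate point (your factorization of the joint law of $(D_1,S_1,D_2,S_2)$ handles it correctly). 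What your route buys: it is self-contained rather than citation-based; it is arguably more natural in the context of this paper, whose whole theme (Section 3) is decomposing CRMs and which presents the DP precisely as a normalized gamma process; and it does not require the two components of the base measure to be mutually singular, whereas the restriction-to-disjoint-sets argument does. What the paper's route buys is brevity and elementariness: given the classical literature the result is a known fact, and the finite-dimensional Dirichlet calculus avoids any appeal to L\'{e}vy-measure superposition. Your parameter bookkeeping ($\gamma w_1 = \tilde n$, $\gamma w_2 = \alpha + n - \tilde n$) is also correct, so the proposal stands as a complete alternative proof.
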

\begin{proof}
This is a direct extension of the fact that the Dirichlet process has Dirichlet-distributed marginals \citep{Ferguson:1973}. See Chapter 3 of \citet{Ghosh:Ramamoorthi:2003} for a detailed analysis.
\end{proof}
\begin{algorithm}[t]
\caption{Hybrid DPMM Sampler\label{algo:hybrid_dpmm}}
\begin{algorithmic}[1]
\While{not converged}
\State Select $J$
\State Sample $B^* \sim \mbox{Beta}(\tilde{n}, n-\tilde{n} + \alpha)$
\State Sample $(\pi_1,\dots,\pi_J) \sim \mbox{Dir}(m_1, \ldots, m_J)$
\State For $k\leq J$, sample 
 $\theta_k \sim p(\theta_k|H, \{X_i:Z_i=k\})$
\State Sample $Z_i$, $\forall i = 1,\ldots, n$, using Equation \ref{eqn:special}
\EndWhile
\end{algorithmic}
\end{algorithm}
We note that the posterior atom weights $(\pi_1,\dots, \pi_J)$ for the finite component are distributed according to $\mbox{Dirichlet}(m_1,\dots, m_J)$, and can easily be sampled as part of an uncollapsed sampler.  Conditioned on $\{\pi_k, \theta_k: k\leq J\}$ and $B^*$ we can sample the cluster allocation, $Z_i$ of point $X_i$ as
\begin{equation}P(Z_i=k|-) \propto \begin{cases} B^*\pi_k f(x_n;\theta_k) & k\leq J\\
    \frac{(1-B^*)m_k}{\sum_{j>K} m_j + \alpha}f_k(x_n) & J < k \leq K\\
    \frac{(1-B^*)\alpha}{\sum_{j>K} m_j + \alpha}f_H(x_n) & k=K+1
  \end{cases}
  \label{eqn:special}
\end{equation}
where $f(X_i;\theta_k)$ is the likelihood for each mixing component; $f_k(X_i) = \int_\Theta f(X_i;\theta)p(\theta|\{X_j: Z_j=k, j\neq i\})d\theta$ is the conditional probability of $x_i$ given other members of the $k$th cluster; and $f_H(x_i) = \int_\Theta f(x_i;\theta) H(d\theta)$. This procedure is summarized in Algorithm~\ref{algo:hybrid_dpmm}.

\textbf{Example 3: Pitman-Yor processes}
The Pitman-Yor process \citep{perman1992size,pitman1997two} is a distribution over probability measures, parameterized by $0\leq\sigma<1$ and $\alpha>-\sigma$, that is obtained from a $\sigma$‐stable CRM via a change of measure and normalization. Provided $\alpha>0$, it can also be represented as a Dirichlet process mixture of normalized $\sigma$-stable CRMs \citep[Lemma 22,][]{pitman1997two}. This representation allows us to decompose the posterior distribution into a beta mixture of a finite-dimensional Pitman-Yor process and an infinite-dimensional Pitman-Yor process. We provide more details in the supplementary section \ref{sec:pymm_supli}.

\textbf{Example 4: Hierarchical Dirichlet processes} We can decompose the hierarchical Dirichlet process \citep[HDP,][]{Teh:Jordan:Beal:Blei:2006} in a manner comparable to the Dirichlet process, allowing our hybrid sampler to be used on the HDP. For space reasons, we defer discussion to the supplementary section \ref{sec:hdp_supli}.

\section{\uppercase{Distributed Inference for CRM-based models}}\label{method}

\begin{algorithm}[t]
\caption{Distributed Beta Bernoulli Sampler\label{algo:distributed_ibp}}
\begin{algorithmic}[1]
\Procedure{Local}{$\{X_i, Z_i\}, P$} 

\Comment{Global variables $J$, $P^*$, $\{\theta_k, \mu_k\}_{k=1}^J, n$}
\For{$k \leq J$}
\State Sample $\{Z_{i, k}\}$ according to \eqref{ibp_finite}
\EndFor
\If{$P=P^*$}
\State For $k>J$, sample $Z_{i,k}$ according to \eqref{ibp_mk}
\State Sample $\mbox{Poisson}(\alpha/n)$ new features
\EndIf
\EndProcedure

\Procedure{Global}{$\{X_i, Z_i\}$}
\State Gather feature counts $m_k$ and parameter sufficient statistics $\Psi_k$ from all processors.
\State Let $J$ be the number of instantiated features.
\State For $k: m_k > 0$, sample 
$$\begin{aligned}\mu_k \sim& \mbox{Beta}(m_k, n - m_k + c)\\
\theta_k \sim& p(\theta_k|\Psi_k, H)\end{aligned}$$
\State Sample $P^* \sim \mbox{Uniform}(1, \ldots, P)$
\EndProcedure
\end{algorithmic}
\end{algorithm}
\vspace{-1em}
The sampling algorithms in Section~\ref{sec:hybrid} can easily be adapted to a distributed setting, where data are partitioned across several $P$ machines and communication is limited. In this setting, we set $J=K$ (i.e.\ the number of currently instantiated features) after every communication step. We instantiate the finite measure ($M^*_{\leq J}$ in the case of CRMs, $D^*_{>J}$ for DPMMs), with globally shared atom sizes and locations, on all processors. 

We then randomly select one out of $P$ processors by sampling $P^*\sim \mbox{Uniform}(1,\dots, P)$. On the remaining $P-1$ processors, we sample the allocations $Z_i$ using restricted Gibbs sampling \citep{Neal:1998}, enforcing $Z_{i, k}=0$ for $k>J$. When working with DPMMs, this means that we only need to calculate cluster probabilities that depend on the instantiated measure $D^*_{\leq J}$. In the CRM case, it means that we avoid the integral in Equation~\ref{eqn:lhood}, and hence avoid any dependence on $Z_{-i}$ or $X_{-i}$ when conditioning on $M^*_{\leq J}$. In both cases, this means that we do not need knowledge of the feature allocations on other processors, and can sample the $Z_i$ independent of each other.

On the remaining processor $P^*$, we sample the $Z_i$ using unrestricted Gibbs sampling. Let $\mathcal{P}^*$ be the set of indices of data points on processor $P^*$. Since we know that $Z_{j, k}=0$ for all $j\not\in \mathcal{P}^*$, then we can calculate the full sufficient statistics for features $k>J$ without knowledge of data or latent features on other processors. While $P(Z_i|-)$ does depend on $\{Z_j, X_j: j\in \mathcal{P}\}$, it is independent of  $\{Z_j, X_j: j\not \in \mathcal{P}\}$ conditioned on $D^*_{\leq J}$ (or $M^*_{\leq J}$) plus the fact that $Z_{j, k}=0$ for all $j\not\in \mathcal{P}^*$, so can be calculated without further information from the other processors.

At each global step, we gather the sufficient statistics from all instantiated clusters -- from both the finite component $M^*_{\leq J}$ / $D^*_{\leq J}$ and the infinite component $M^*_{>J}$ / $D^*_{>J}$ -- and sample parameters for those clusters. We then create a new partition, redefining $J$ as the current number of instantiated component parameters. In the case of the DPMM, we also resample $B\sim \mbox{Beta}(N,\alpha)$.
We summarize the distributed algorithm for the special cases of the beta-Bernoulli process and the DPMM in Algorithms \ref{algo:distributed_ibp} and \ref{algo:distributed_dpmm} and for PYMM in algorithm \ref{algo:distributed_pymm} in supplementary.


\begin{algorithm}[t]
\caption{Distributed DPMM Sampler\label{algo:distributed_dpmm}}
\begin{algorithmic}[1]
\Procedure{Local}{$\{X_i, Z_i\}$} 

\Comment{Global variables $J, P^*,\{\theta_k, \pi_k\}_{k=1}^J, B^*$}
\If {$P = P^{\ast}$}
\State Sample $Z_i$ according to \eqref{eqn:special}
\Else
\State $P(Z_i = k) \propto \pi_k f(X_i;\theta_k)$
\EndIf
\EndProcedure

\Procedure{Global}{$\{X_i, Z_i\}$}
\State Gather cluster counts $m_k$ and parameter sufficient statistics $\Psi_k$ from all processors.
\State Sample $B^* \sim \mbox{Beta}(n, \alpha)$
\State Let $J$ be the number of instantiated clusters.
\State Sample $(\pi_1, \dots, \pi_J) \sim \mbox{Dir}(m_1, \ldots, m_J)$
\State For $k: m_k>0$, sample 
$ \theta_k \sim p(\theta_k|\Psi_k, H).$
\State Sample $P^* \sim \mbox{Uniform}(1, \ldots, P)$
\EndProcedure
\end{algorithmic}
\end{algorithm}

\subsection{Warm-start Heuristics}\label{sec:warm}
In our distributed approach, only $1/P$ of the data points eligible to start a new cluster or feature, meaning that the rate at which new clusters/features are added will decrease with the number of processors. This can lead to slow convergence if we start with too few clusters. To avoid this problem, we initialize our algorithm by allowing \textit{all} processors to instantiate new clusters. At each global step, we decrease the number of randomly selected processors eligible to instantiate new clusters, until we end up with a single processor. This approach tend to over-estimate the number of clusters. However, the procedure acts in a manner similar to simulated annealing, by encouraging large moves early in the algorithm but gradually decreasing the excess stochasticity until we are sampling from the correct algorithm.
We note that a sampler with multiple processors instantiating new clusters is not a correct sampler until we revert to a single processor proposing new features, because correct MCMC samplers are invariant to its starting position. 

\section{\uppercase{Experimental evaluation}}\label{experiments}
While our primary contribution is in the development of distributed algorithms, we first consider, in Section~\ref{sec:hybrid_experiments}, the performance of the hybrid algorithms developed in Section~\ref{sec:hybrid} in a non-distributed setting. We show that this performance extends to the distributed setting, and offers impressive scaling, in Section~\ref{sec:dist_exp}.

\subsection{Evaluating the Hybrid Sampler}\label{sec:hybrid_experiments}
We begin by considering the performance of the hybrid samplers introduced in Section~\ref{sec:hybrid} in a non-distributed setting. For this, we focus on the Dirichlet process, since there exist a number of collapsed and uncollapsed inference algorithms; we expect similar results under other models.

\begin{figure}[t]
\begin{center}
\includegraphics[width=\linewidth]{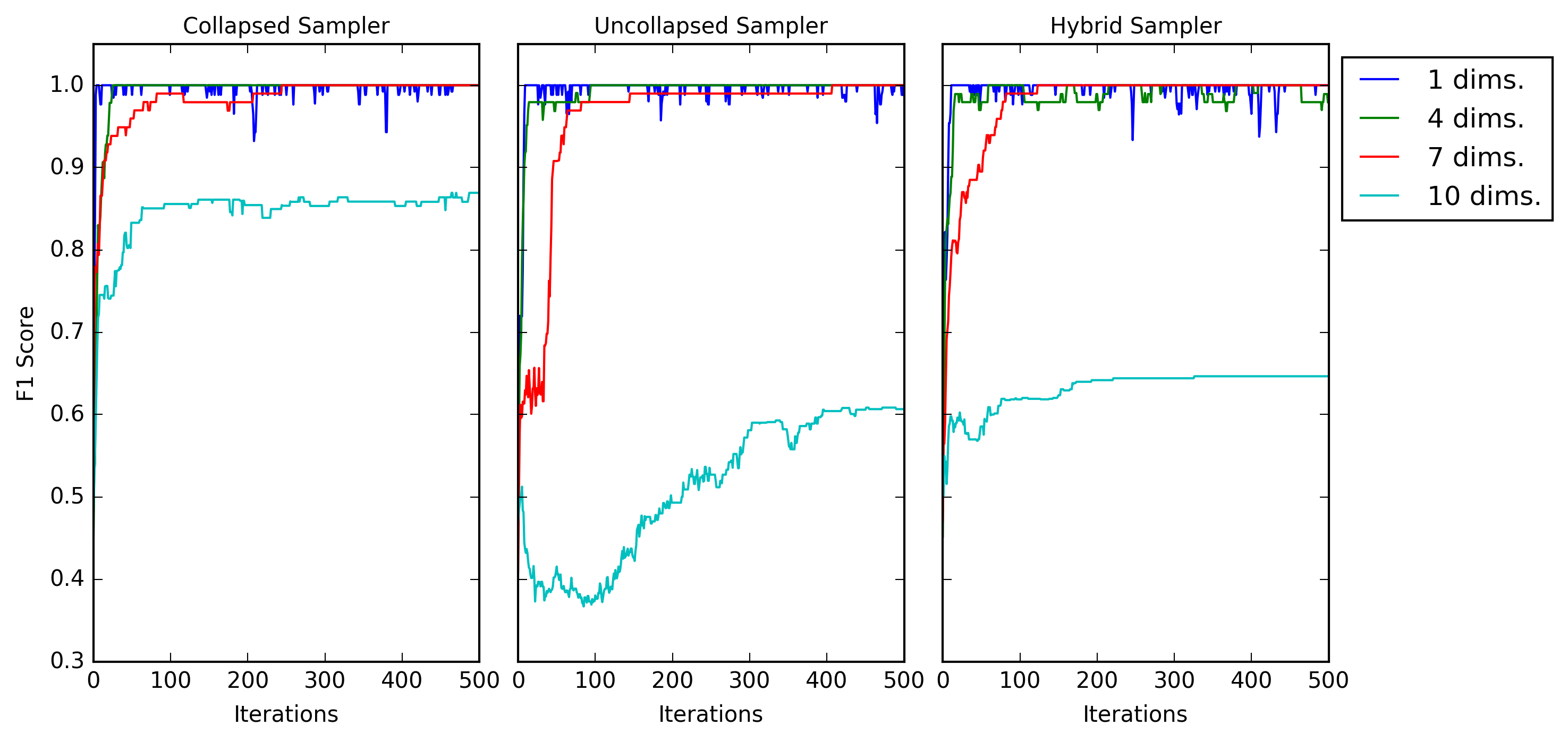}\label{fig:dp_comparison}

\caption{Synthetic data experiments. Comparison of F1 scores over iteration for the collapsed, uncollapsed and hybrid samplers for growing dimensions. }\label{fig:synthetic_comparison}
\end{center}
\end{figure}

\begin{figure*}[h]

\begin{center}
\includegraphics[width=.24\linewidth]{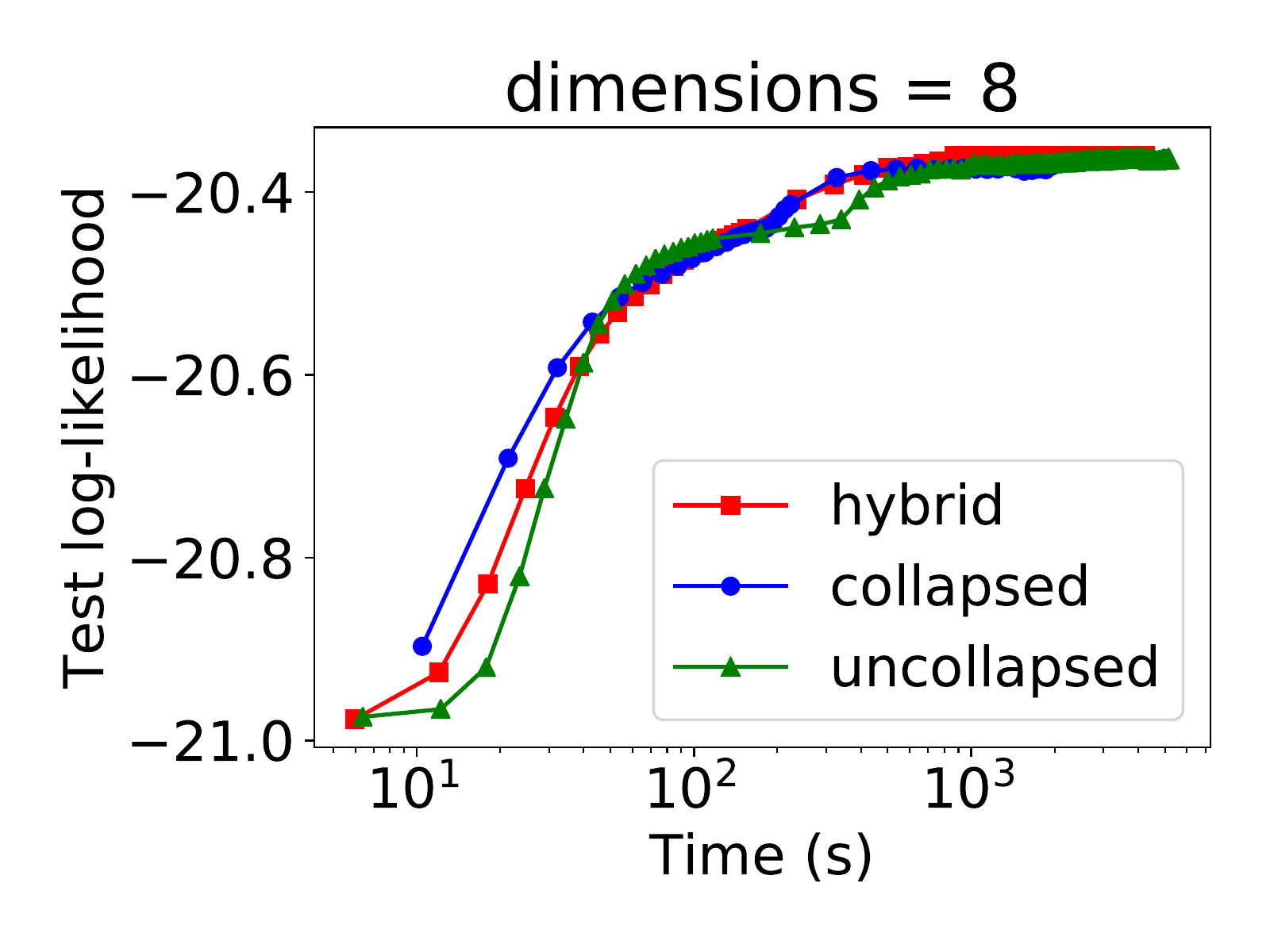}
\includegraphics[width=.24\linewidth]{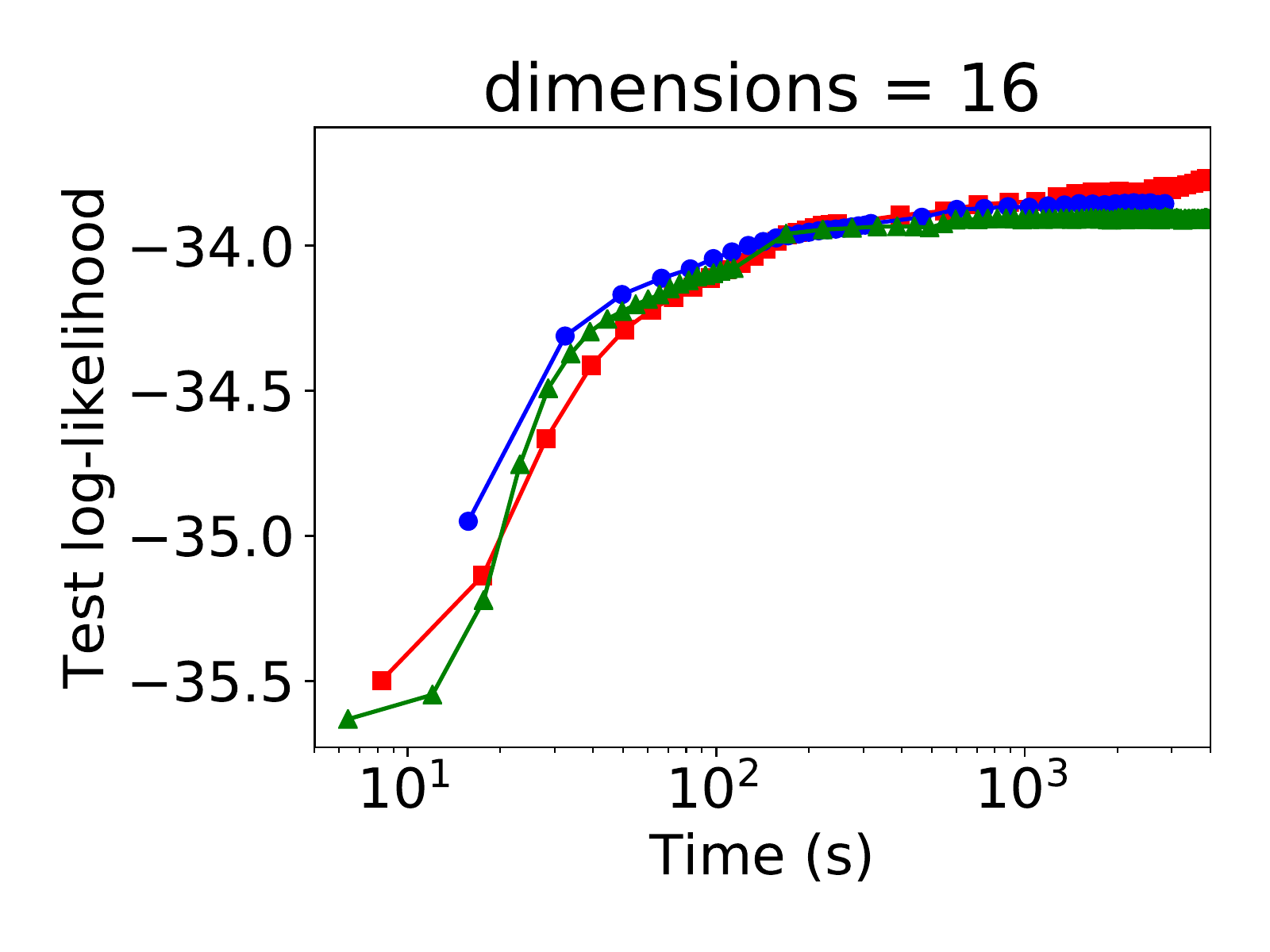}
\includegraphics[width=.24\linewidth]{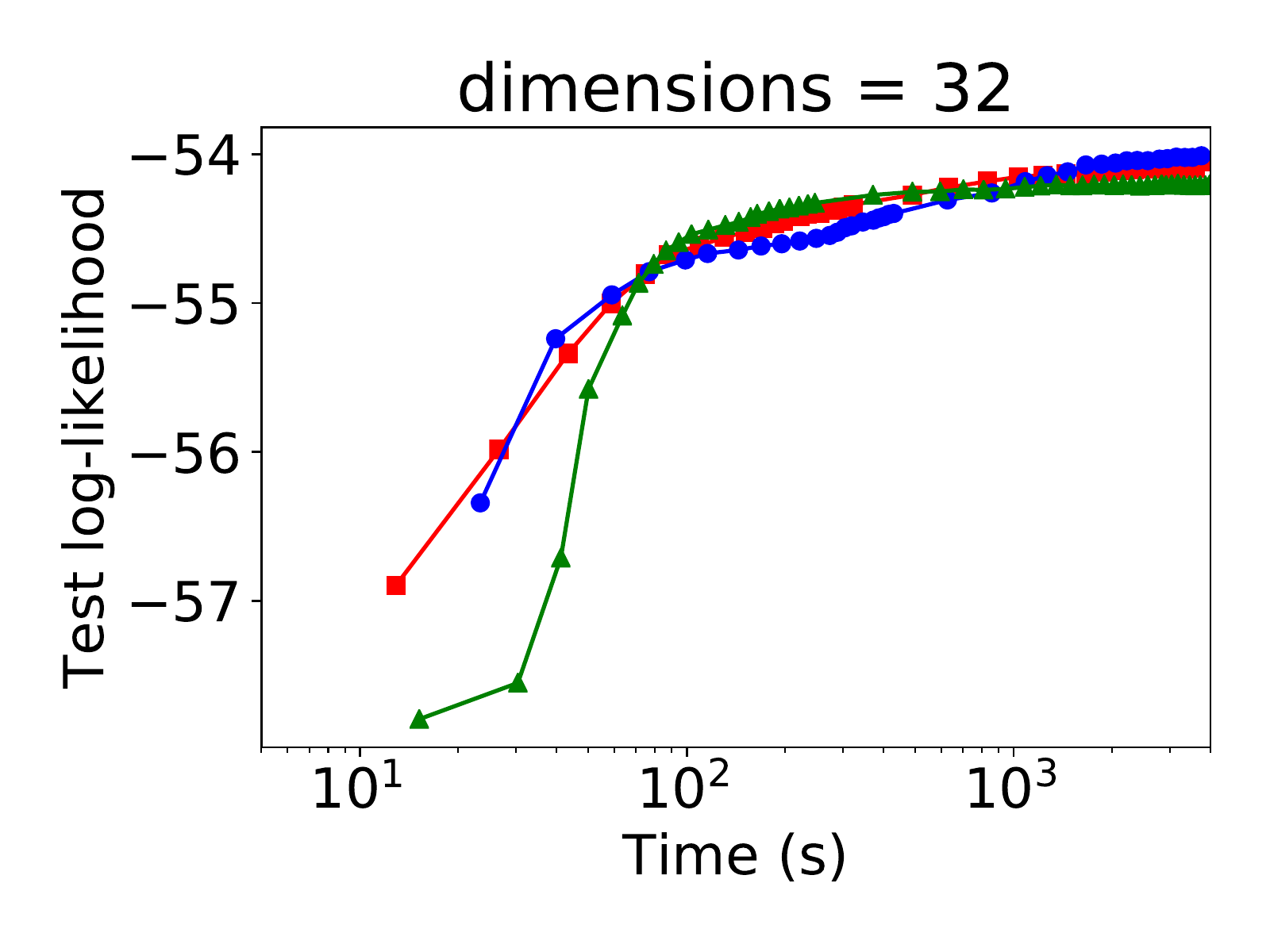}
\includegraphics[width=.24\linewidth]{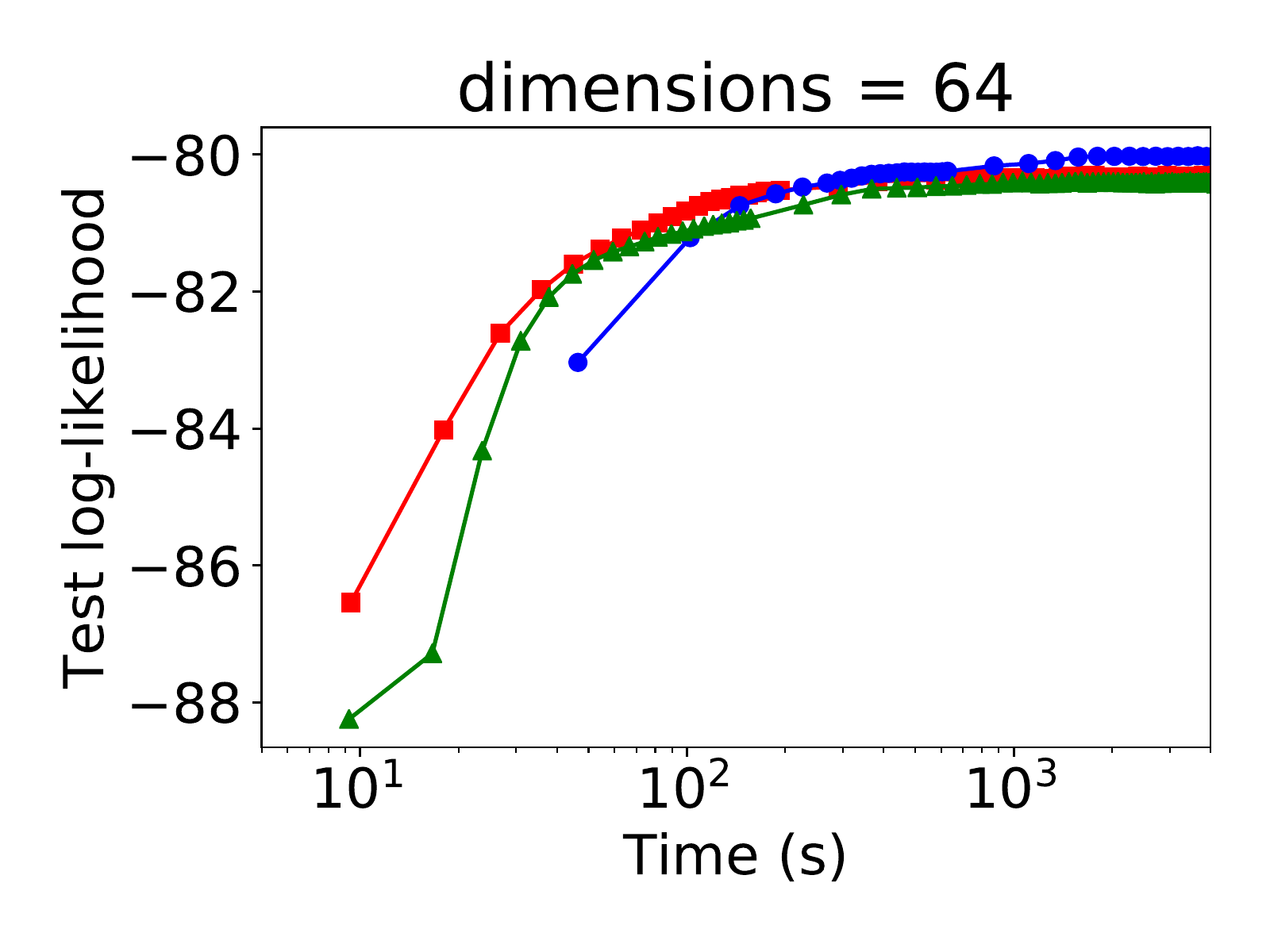}
\caption{Test log-likelihood with time on CIFAR-100 dataset as the dimensionality increases from 8 to 64.\label{fig:cfar_dp_dim}}
\end{center}
\end{figure*}

We compare the hybrid sampler of Algorithm~\ref{algo:hybrid_dpmm} with a standard collapsed Gibbs sampler and an uncollapsed sampler based on Algorithm 8 of \citet{Neal:1998}. Algorithm 8 collapses occupied clusters and instantiates a subset of unoccupied clusters; we modify this to instantiate the atoms associated with unoccupied clusters. Concretely, at each iteration, we sample weights for the $K$ instantiated clusters plus $U$ uninstantiated clusters as $(\pi_1, \dots, \pi_K, \pi_{K+1}, \dots, \pi_{K+U}) \sim \mbox{Dir}\left(m_1, \dots, m_K, \frac{\alpha}{J},\dots, \frac{\alpha}{J}\right)$, 
and sample locations for the uninstantiated clusters from the base measure $H$. 
Note that this method can be distributed 
easily.

Figure~\ref{fig:synthetic_comparison} shows convergence plots for the three algorithms. The data set is a $D$ dimensional synthetic data set consisting of $100$ observations of Gaussian mixtures with $2$ true mixture components centered at $5\times\left\{ 1 \right\}^{D}$ and $-5\times\left\{ 1 \right\}^{D}$ with an identity covariance matrix.

While the three algorithms perform comparably on low-dimensional data, as the dimension increases the performance of the uncollapsed sampler degrades much more than the collapsed sampler. This is because in high dimensions, it is unlikely that a proposed parameter will be near our data, so the associated likelihood of any given data point will be low. This is in contrast to the collapsed setting, where we integrate over all possible locations. While the hybrid method performs worse in high dimensions than the collapsed method, it outperforms the uncollapsed method.

The synthetic data in Figure~\ref{fig:synthetic_comparison} has fairly low-dimensional structure, so we do not see negative effects due to the poor mixing of the collapsed sampler. Next, we evaluate the algorithms on the CIFAR-100 dataset \citep{Krizhevsky:2009}. We used PCA to reduce the dimension of the data to between 8 and 64, and plot the test set log likelihood over time in Figure~\ref{fig:cfar_dp_dim}. Each marker represents a single iteration. We see that the uncollapsed sampler requires more iterations to converge than the collapsed sampler; however since each iteration takes less time, in some cases the wall time to convergence is slower. The hybrid method has comparable iteration time to the collapsed, but, in general, converges faster. We see that, even without taking advantage of parallelization, the hybrid method is a compelling competitor to pure-collapsed and pure-uncollapsed algorithms.

\subsection{Evaluating the Distributed Samplers}\label{sec:dist_exp}

Here, we show that the distributed inference algorithms introduced in Section~\ref{method} allow inference in BNP models to be scaled to large datasets, without sacrificing accuracy. We focus on two cases: the beta-Bernoulli process (Algorithm~\ref{algo:distributed_ibp}) and the Dirichlet process (Algorithm~\ref{algo:distributed_dpmm})\footnote{Our code is available online at \href{https://github.com/michaelzhang01/hybridBNP}{\texttt{https://github.com/michaelzhang01/hybridBNP}}}.

\subsubsection{Beta-Bernoulli Process}
\begin{figure*}[h]
\centering
\includegraphics[width=0.325\linewidth]{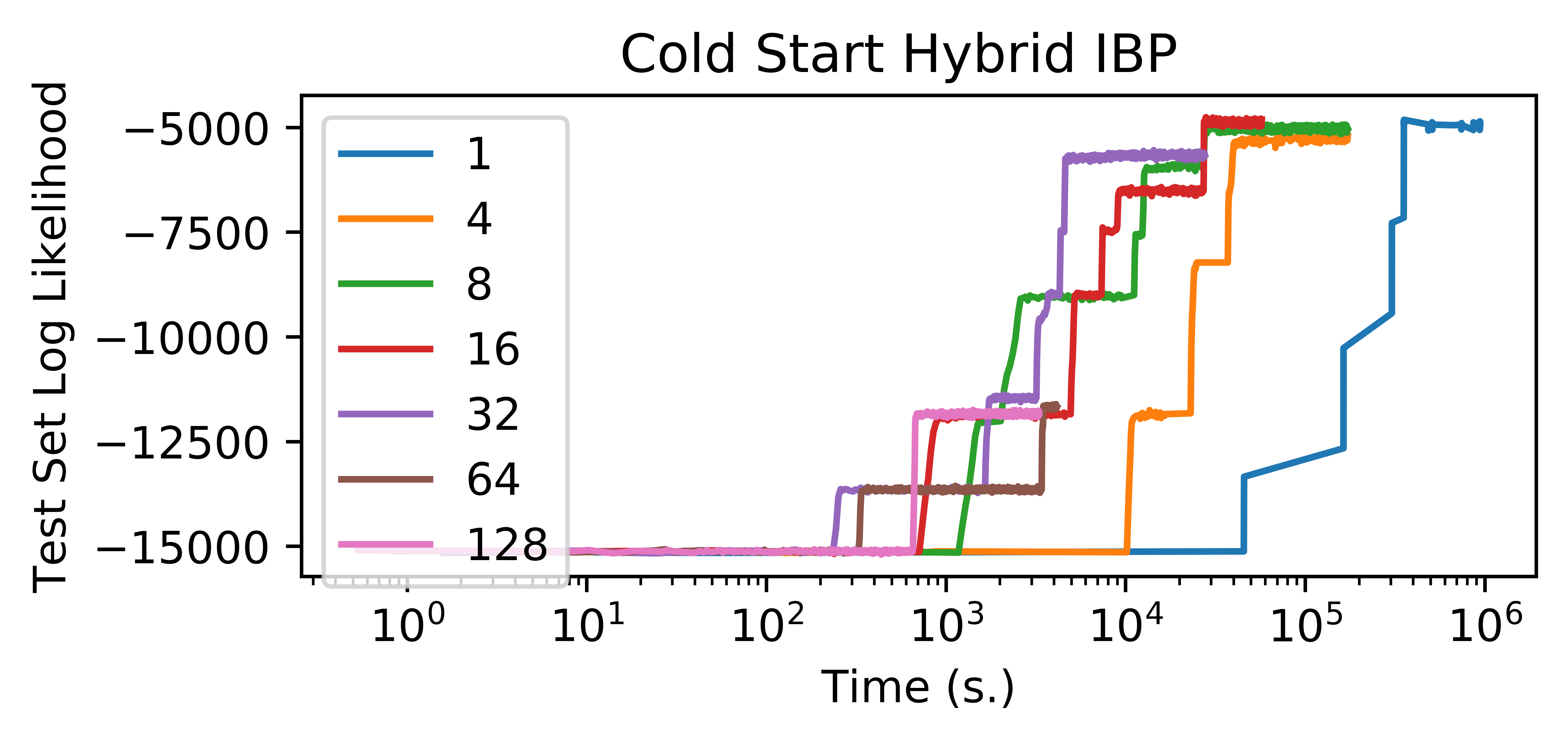}\label{fig:LL_cold}
\includegraphics[width=0.325\linewidth]{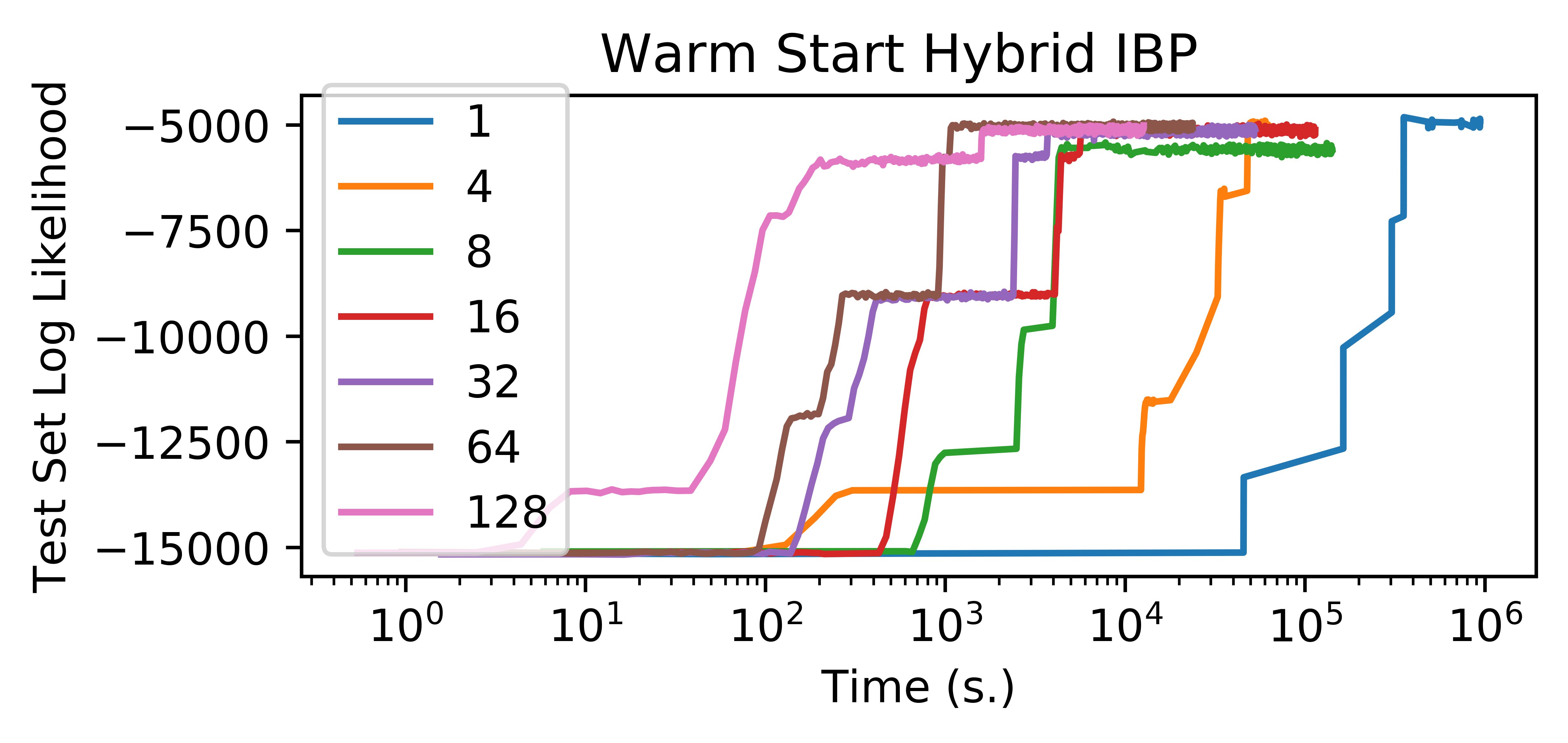}\label{fig:LL_warm}
\includegraphics[width=0.325\linewidth]{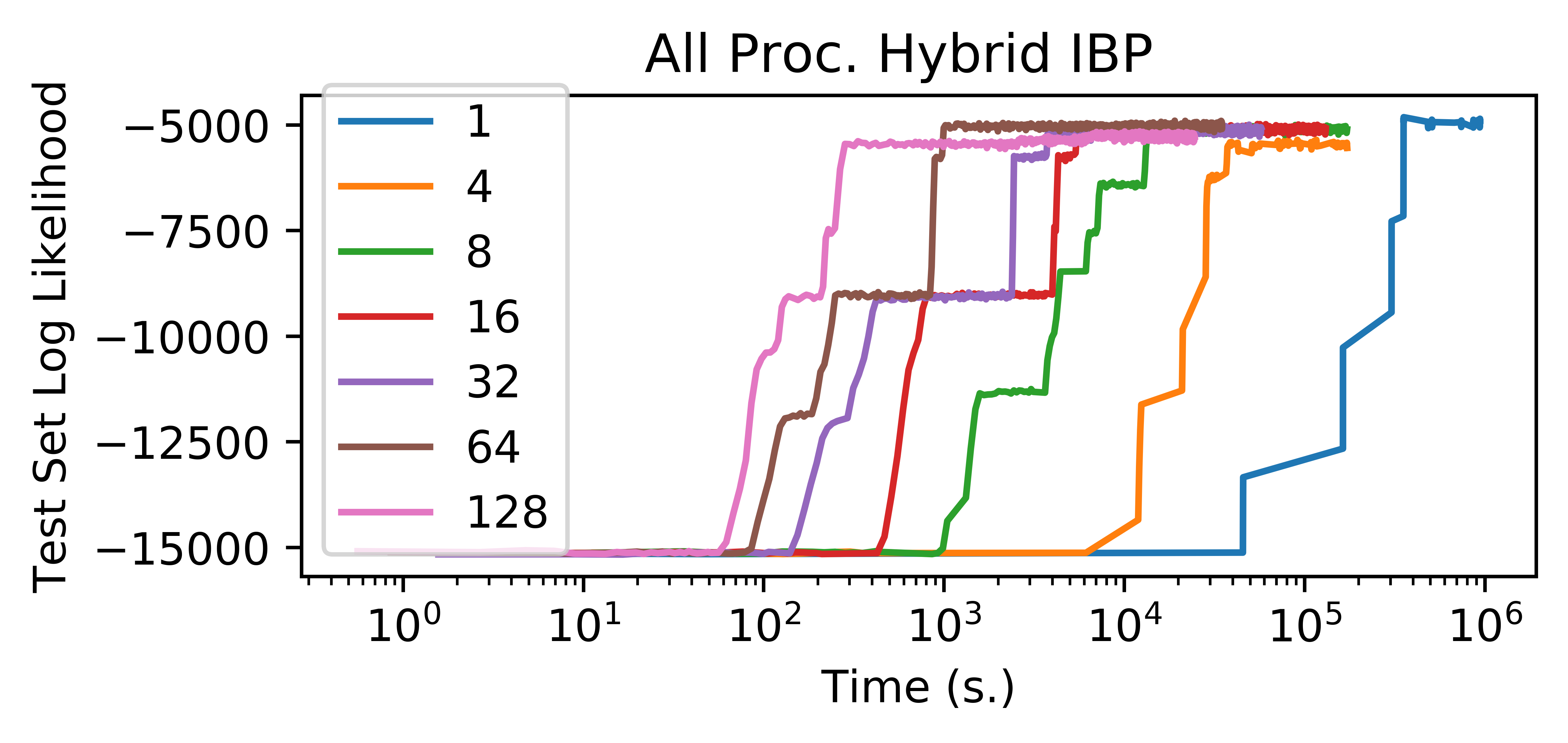}\\
\includegraphics[width=0.325\linewidth]{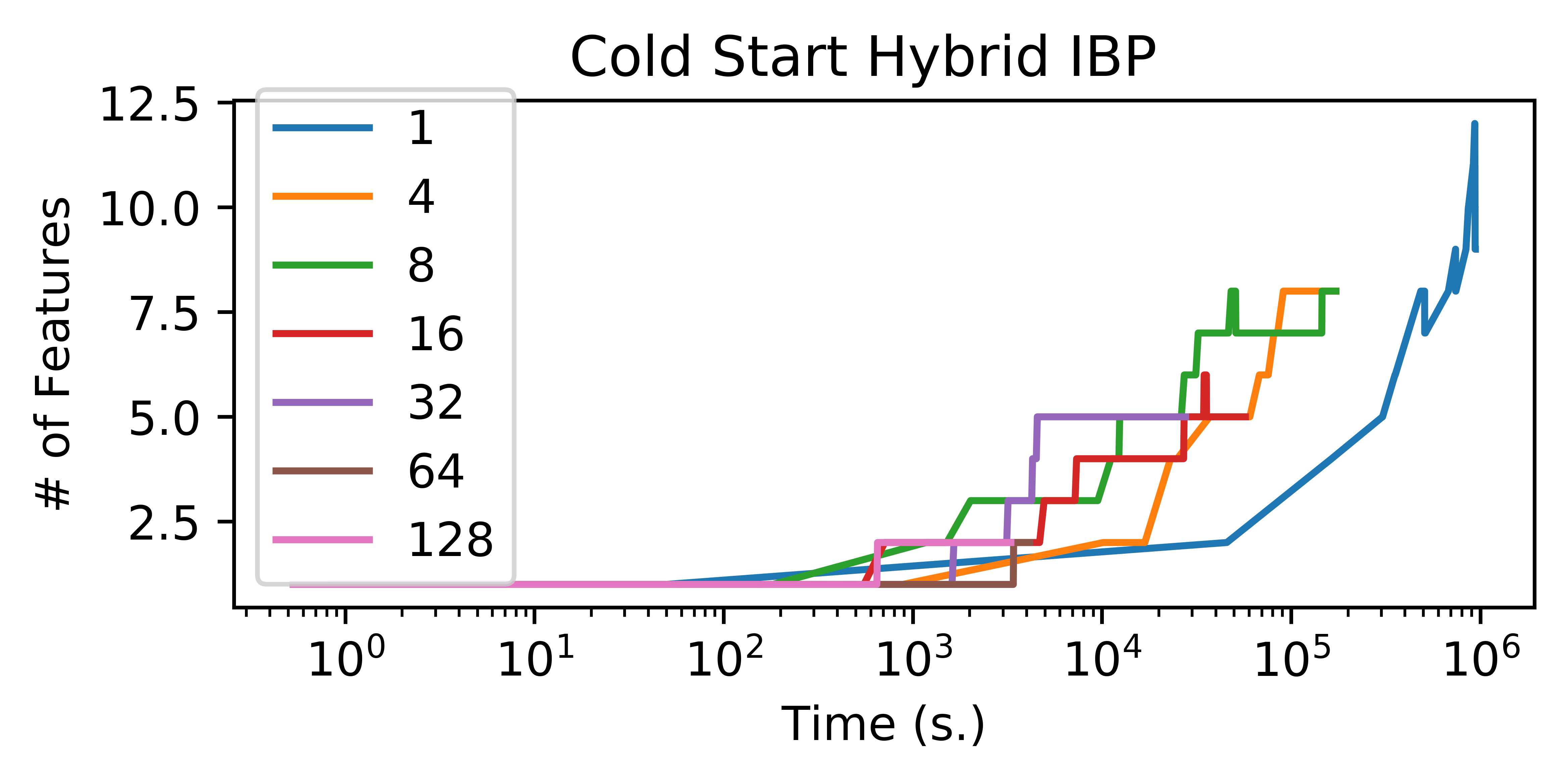}\label{fig:feat_cold}
\includegraphics[width=0.325\linewidth]{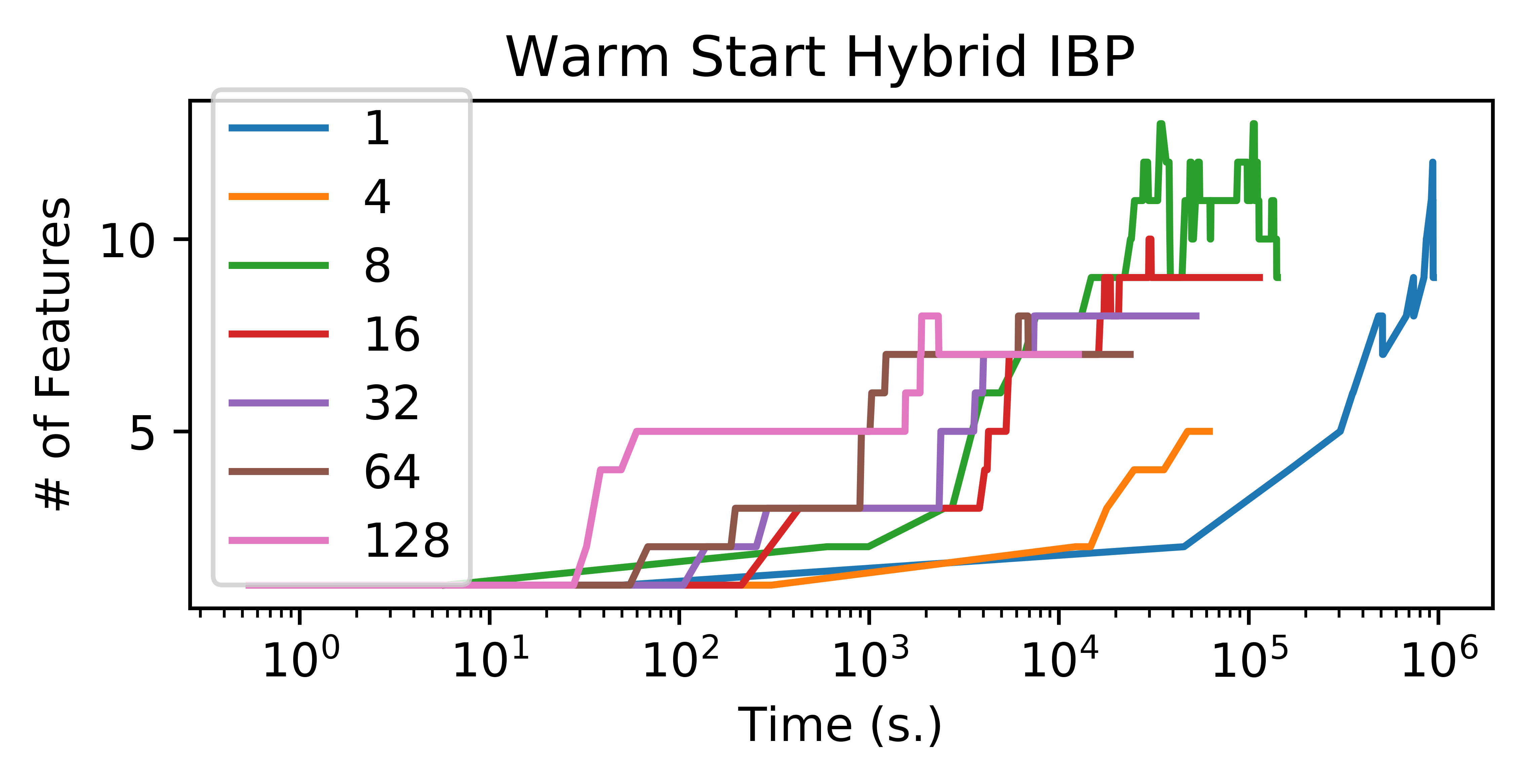}\label{fig:feat_warm}
\includegraphics[width=0.325\linewidth]{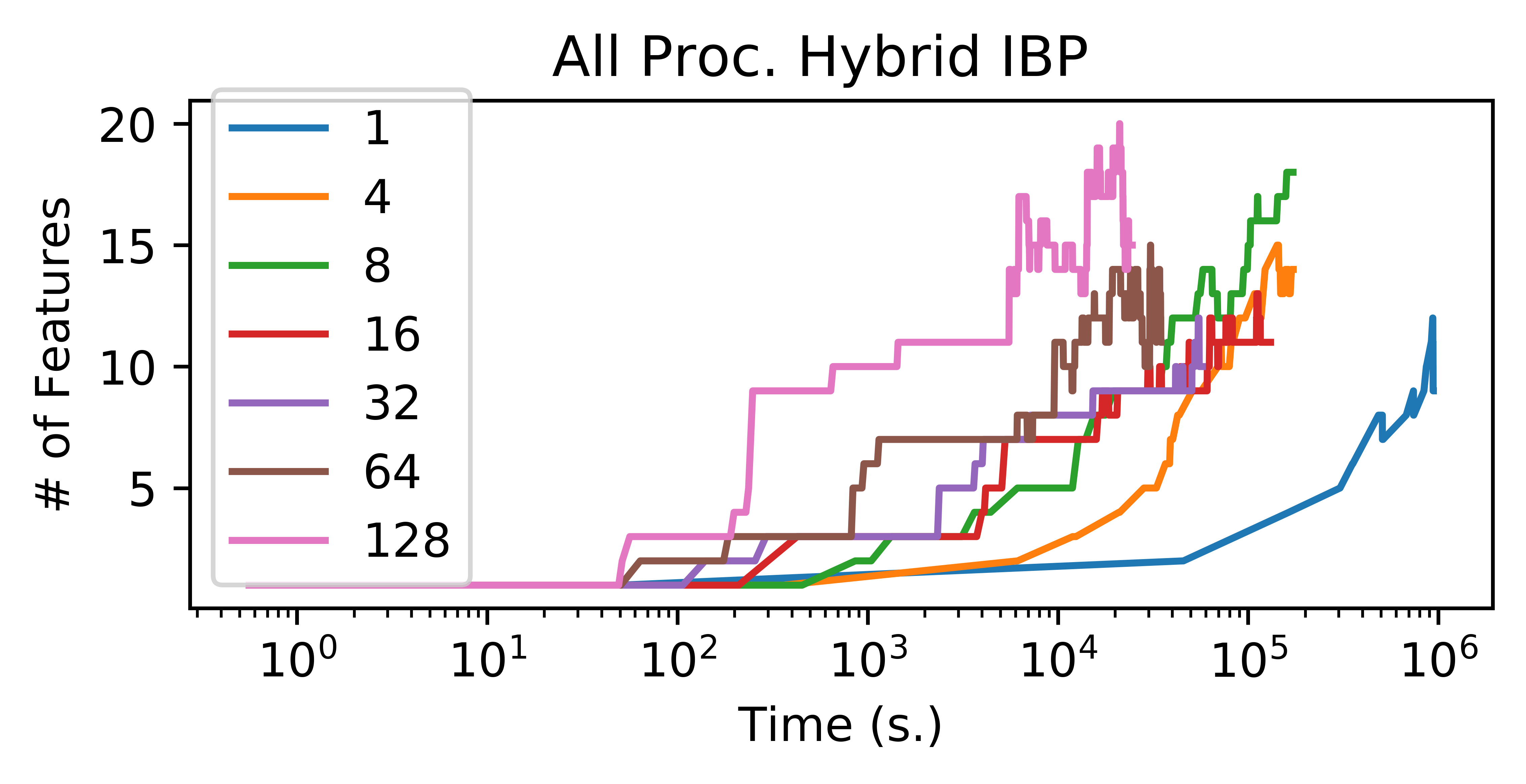}
\caption{\textbf{Top row left to right}: Test set log likelihood (y-axis) on synthetic data without warm-start initialization, with warm start, and with all processors on. The x-axis represents CPU wall time in seconds, on a log scale. 
\textbf{Bottom row left to right} number of features over iteration with cold-start, warm-start, and all processors introducing features. 
Y-axis represents number of instantiated features.}\label{fig:ibp}\vspace{-0.1in}
\end{figure*}
We evaluate the beta-Bernoulli sampler on synthetic data based on the ``Cambridge'' dataset, used in the original IBP paper \citep{Griffiths:Ghahramani:2011}, where each datapoint is the superposition of a randomly selected subset of four binary features of dimension 36, plus Gaussian noise with standard deviation 0.5.\footnote{See Figure~\ref{ibp_features} in the supplement for more details} We model this data using a linear Gaussian likelihood, with $Z\sim  \mbox{Beta-Bern}(\alpha, 1)$, $A_k \sim \mbox{Normal}(0,\sigma^2_A\mathbf{I})$, $X_n \sim \mbox{Normal}\left( \sum_k z_{nk}A_k, \sigma^2_X\mathbf{I} \right)$

We initialized to a single feature, and ran the hybrid sampler for 1,000 total observations with a synchronization step every $5$ iterations, distributing over 1, 4, 8, 16, 32, 64 and 128 processors. 

We first evaluate the hybrid algorithm under a ``cold start'', where only one processor is allowed to introduce new features for the entire duration of the sampler. In the top left plot of Figure~\ref{fig:ibp}, we see that the cold start results in slow convergence of in the test set log likelihood for large numbers of processors. 
 We can see in the bottom left plot of Figure~\ref{fig:ibp} that the number of features grows very slowly, as only $1/P$ processors are allowed to propose new features in the exact setting.



Next, we explore warm-start initialization, as described in Section~\ref{sec:warm}. For the first one-eighth of the total number of MCMC iterations, all processors can propose new features; after this we revert to the standard algorithm. 
The top central plot of Figure~\ref{fig:ibp} shows predictive log likelihood over time, and the bottom central plot shows number of features. We see that convergence is significantly improved relative to the cold-start experiments. Since we revert to the asymptotically correct sampler, the final number of features is generally close to the true number of features, 4.\footnote{Note that BNP models are not guaranteed to find the correct number of features in the posterior; see \citet{miller2013simple}} Additionally, we see that convergence rate increases monotonically in the number of processors. 


\begin{figure}
    \centering
    \includegraphics[width=.9\linewidth]{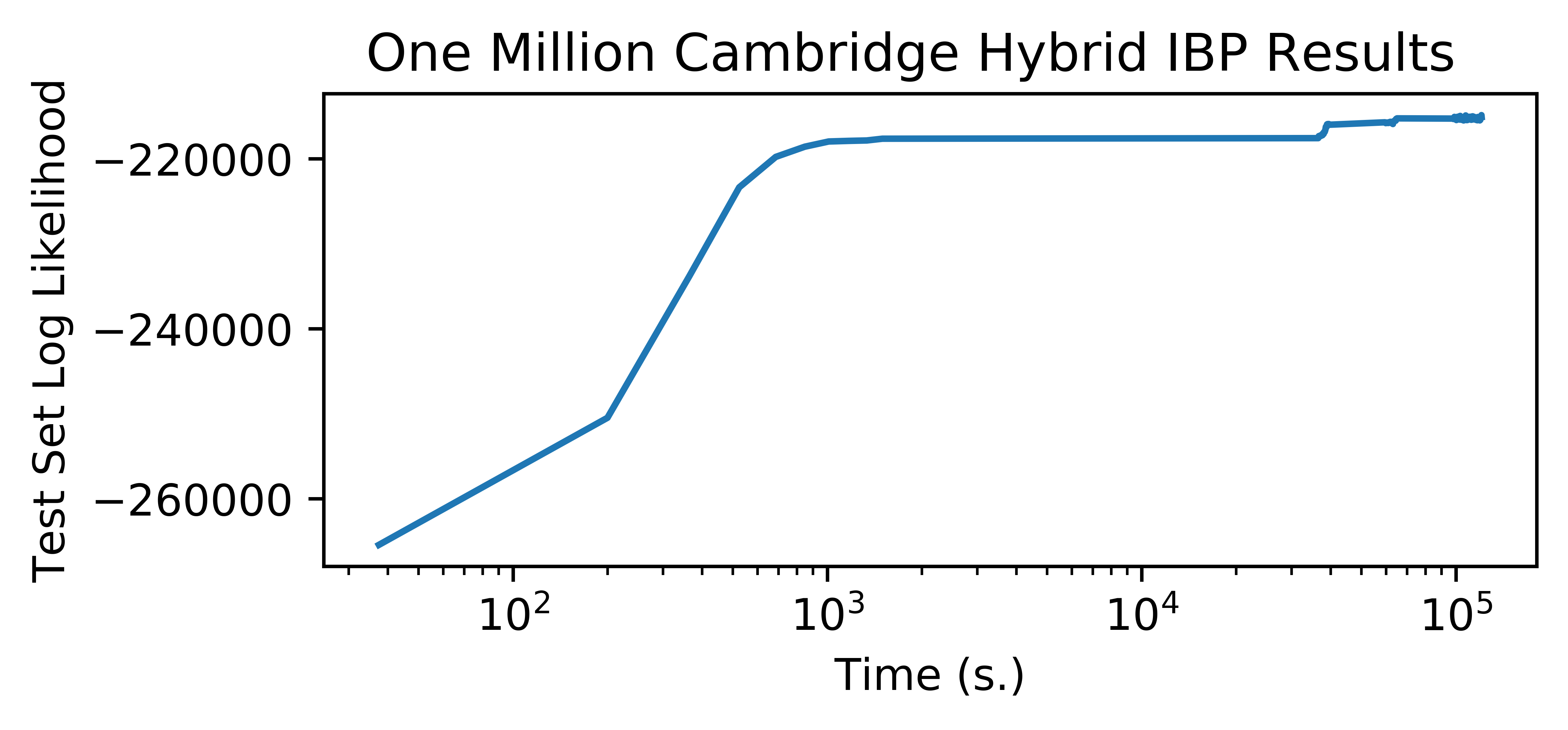}
    \caption{Test set log likelihood trace plot for a million observation ``Cambridge'' data set.}
    \label{fig:big_ibp}\vspace{-0.1in}
\end{figure}

Next, we allowed all processors to propose new features for the entire duration (``always-hot''). This setting approximately replicates the behavior of the parallel IBP sampler of \citet{Doshi-Velez:2009}. In the top right plot of Figure~\ref{fig:ibp}, we can see that all experiments roughly converge to the same test log likelihood. However, the number of features introduced (bottom right plot) is much greater than the warm start experiment, grows with the number of processors. Moreover, the difference in convergence rates between processors is not as dramatic as in the warm-start trials. 
 






Next, we demonstrate the scalability of our distributed algorithm on a massive synthetic example, showing it can be used for large-scale latent feature models. We generate one million ``Cambridge'' synthetic data points, as described for the previous experiments,  and distribute the data over 256 processors. This experiment represents the largest experiment ran for a beta-Bernoulli process algorithm (the next largest being 100,000 data points, in  \citealt{Doshi-Velez:2009}). We limited the sampler to run for one day and completed 860 MCMC iterations. In Figure~\ref{fig:big_ibp}, we see in the test set log likelihood traceplot that we can converge to a local mode fairly quickly under a massive distributed setting.

\subsubsection{Dirichlet Process}

Our distributed inference framework can also speed up inference in a DP mixture of Gaussians, using the version described in Algorithm~\ref{algo:distributed_dpmm}. We used a dataset containing the top 64 principle components of the CIFAR-100 dataset, as described in Section~\ref{sec:hybrid_experiments}. We compared against two existing distributed inference algorithms for the Dirichlet process mixture model, chosen to represent models based on both uncollapsed and collapsed samplers: 1) A DP variant of the asynchronous sampler of \cite{Smyth:Welling:Asuncion:2009}, an approximate collapsed method; and 2) the distributed slice sampler of \citet{Ge:Chen:Wan:Ghahramani:2015}, an uncollapsed method. 

Figure~\ref{fig:cifar100_methods} shows, when distributed over eight processors, our algorithm converges faster than the two comparison methods, showing that the high quality performance seen in Section~\ref{sec:hybrid_experiments} extends to the distributed setting. Further, in Figure~\ref{fig:cifar100_methods} we see roughly linear speed-up in convergence as we increase the number of processors from 1 to 8.

\begin{figure}[t]
\begin{center}
\includegraphics[width=.49\linewidth]{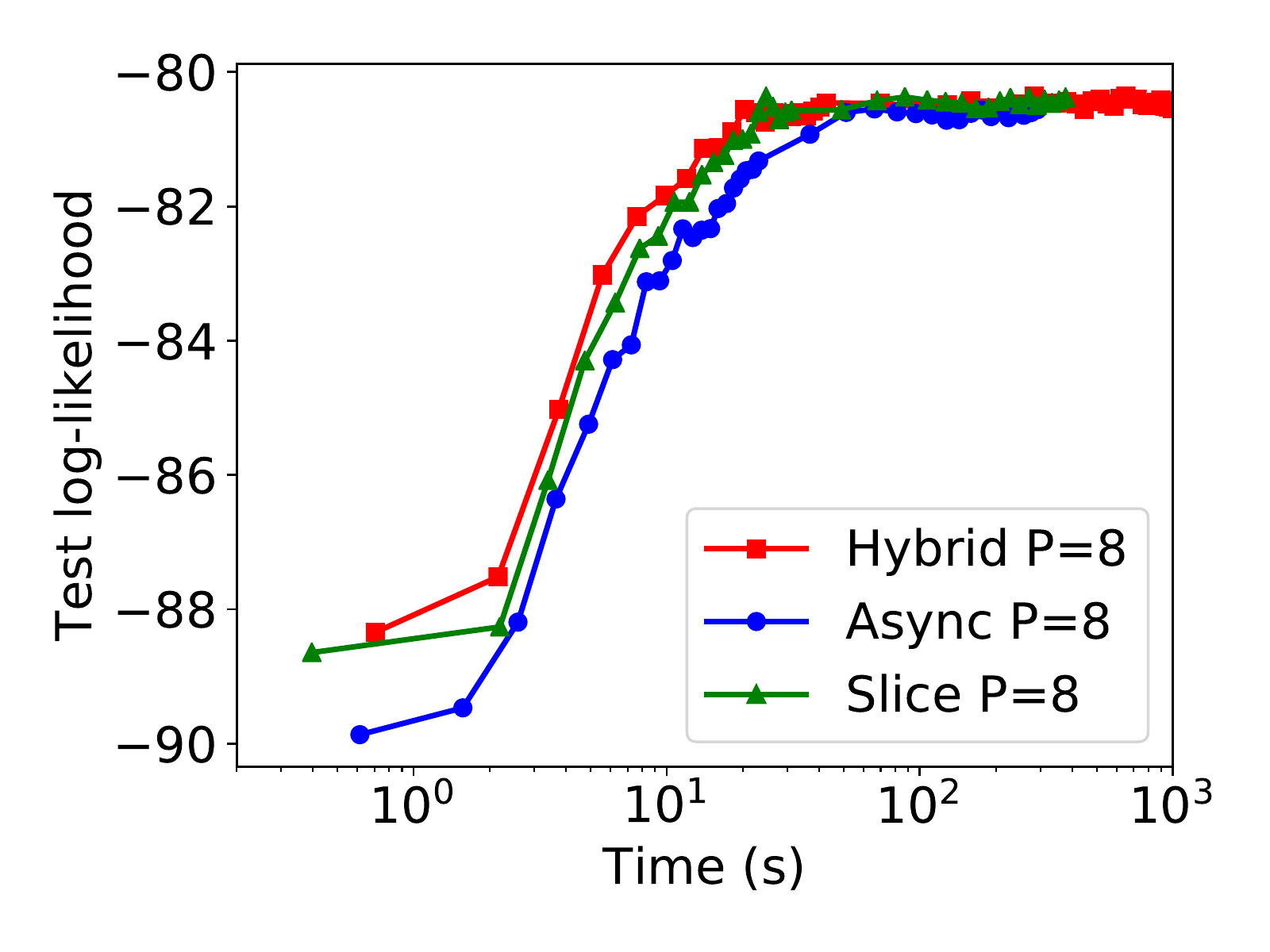}
\includegraphics[width=.49\linewidth]{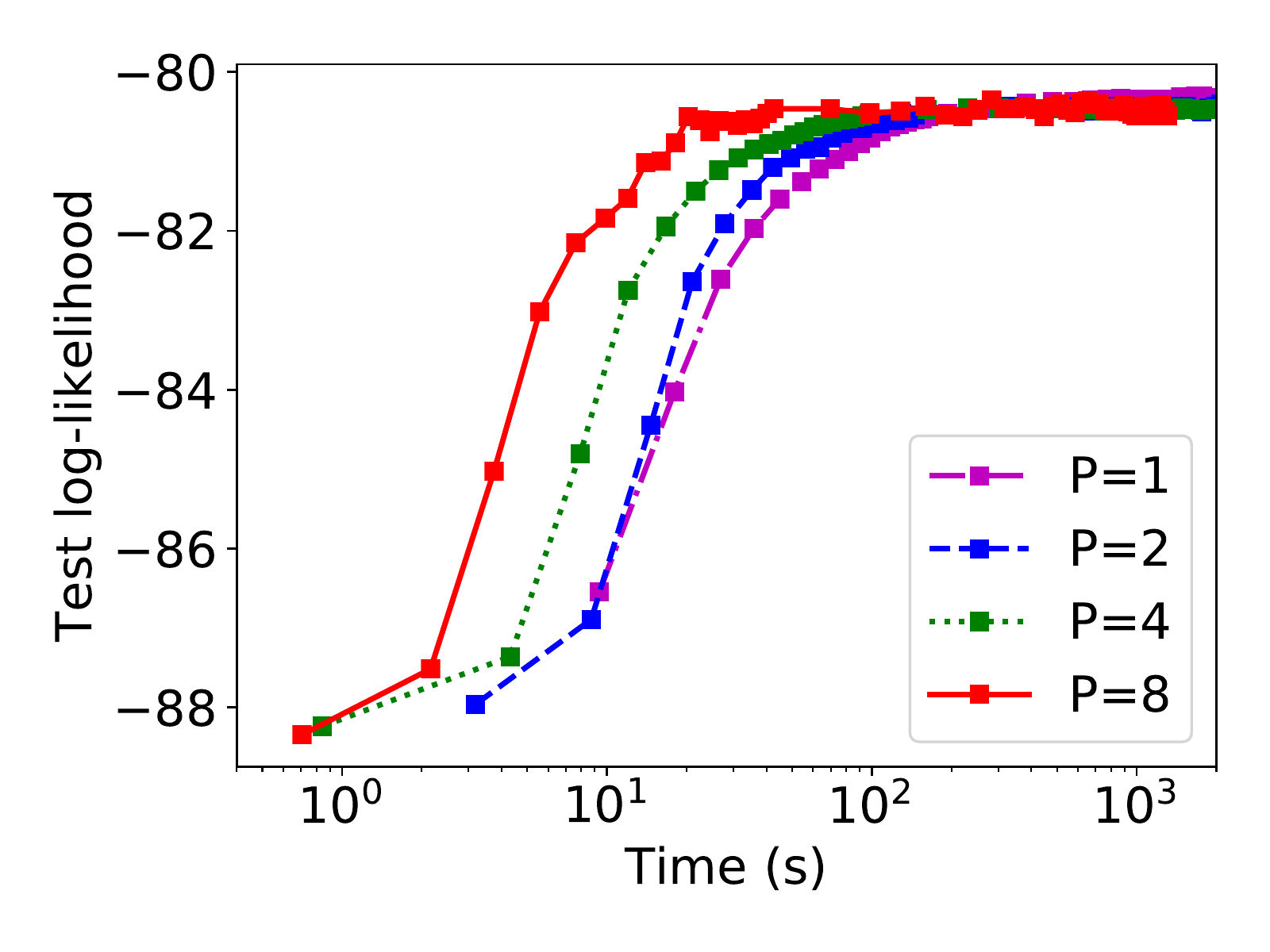}
\caption{Comparison of CIFAR-100 test log-likelihood with time, with baseline methods (\textbf{left}) and with varying number of processor (\textbf{right})
\label{fig:cifar100_methods}}
\end{center}
\end{figure}

\section{Conclusion}\label{discussion}

We have proposed a general inference framework for a wide variety of BNP models. We use the inherent decomposability of the underlying completely random measures to partition the latent random measures into a finite-dimensional component that represents the majority of the data, and an infinite-dimensional component that represents mostly uninstantiated tail. This allows us to take advantage of the inherent parallelizability of the uncollapsed sampler on the finite partition and the better performance of the collapsed sampler for proposing new components. Thus the proposed hybrid inference method can be easily distributed over multiple machines, providing provably correct inference for many  BNP models. Experiments show that, for both the DP and the beta-Bernoulli process, our proposed distributed hybrid sampler converges faster than the comparison methods. 


\bibliography{parallel}

\begin{thebibliography}{}

\bibitem[Blei and Jordan, 2006]{Blei:Jordan:2006}
Blei, D.~M. and Jordan, M.~I. (2006).
\newblock Variational inference for {D}irichlet process mixtures.
\newblock {\em Bayesian Analysis}, 1(1):121--143.

\bibitem[Broderick et~al., 2014]{Broderick:Mackey:Paisley:Jordan:2014}
Broderick, T., Mackey, L., Paisley, J., and Jordan, M.~I. (2014).
\newblock Combinatorial clustering and the beta negative binomial process.
\newblock {\em IEEE Transactions on Pattern Analysis and Machine Intelligence},
  37(2):290--306.

\bibitem[Broderick et~al., 2018]{Broderick:Wilson:Jordan:2018}
Broderick, T., Wilson, A.~C., and Jordan, M.~I. (2018).
\newblock Posteriors, conjugacy, and exponential families for completely random
  measures.
\newblock {\em Bernoulli}, 24(4B):3181--3221.

\bibitem[Chang and Fisher, 2013]{chang2013parallel}
Chang, J. and Fisher, J.~W. (2013).
\newblock Parallel sampling of {DP} mixture models using sub-cluster splits.
\newblock In {\em Advances in Neural Information Processing Systems}, pages
  620--628.

\bibitem[Chang and Fisher, 2014]{chang2014parallel}
Chang, J. and Fisher, J.~W. (2014).
\newblock Parallel sampling of {HDP}s using sub-cluster splits.
\newblock In {\em Advances in Neural Information Processing Systems}, pages
  235--243.

\bibitem[Doshi-Velez and Ghahramani, 2009]{Doshi-Velez:Ghahramani:2009}
Doshi-Velez, F. and Ghahramani, Z. (2009).
\newblock Accelerated sampling for the {I}ndian buffet process.
\newblock In {\em Proceedings of the 26th Annual International Conference on
  Machine Learning}, pages 273--280. ACM.

\bibitem[Doshi-Velez et~al., 2009a]{Doshi:Miller:2009}
Doshi-Velez, F., Miller, K., Van~Gael, J., and Teh, Y.~W. (2009a).
\newblock Variational inference for the {I}ndian buffet process.
\newblock In {\em Artificial Intelligence and Statistics}, pages 137--144.

\bibitem[Doshi-Velez et~al., 2009b]{Doshi-Velez:2009}
Doshi-Velez, F., Mohamed, S., Ghahramani, Z., and Knowles, D.~A. (2009b).
\newblock Large scale nonparametric {B}ayesian inference: Data parallelisation
  in the {I}ndian buffet process.
\newblock In {\em Advances in Neural Information Processing Systems}, pages
  1294--1302.

\bibitem[Dubey et~al., 2014]{Dubey:Williamson:Xing:2014}
Dubey, A., Williamson, S., and Xing, E.~P. (2014).
\newblock Parallel {M}arkov chain {M}onte {C}arlo for {P}itman-{Y}or mixture
  models.
\newblock In {\em Uncertainty in Artificial Intelligence}, pages 142--151.

\bibitem[Ferguson, 1973]{Ferguson:1973}
Ferguson, T.~S. (1973).
\newblock A {B}ayesian analysis of some nonparametric problems.
\newblock {\em Annals of Statistics}, 1(2):209--230.

\bibitem[Ge et~al., 2015]{Ge:Chen:Wan:Ghahramani:2015}
Ge, H., Chen, Y., Wan, M., and Ghahramani, Z. (2015).
\newblock Distributed inference for {D}irichlet process mixture models.
\newblock In {\em Proceedings of the 32nd International Conference on Machine
  Learning}, pages 2276--2284.

\bibitem[Ghahramani and Griffiths, 2005]{Griffiths:Ghahramani:2005}
Ghahramani, Z. and Griffiths, T.~L. (2005).
\newblock Infinite latent feature models and the {I}ndian buffet process.
\newblock In {\em Advances in Neural Information Processing Systems}, pages
  475--482.

\bibitem[Ghosh and Ramamoorthi, 2003]{Ghosh:Ramamoorthi:2003}
Ghosh, J.~K. and Ramamoorthi, R.~V. (2003).
\newblock Bayesian non-parametrics.
\newblock In {\em Springer}.

\bibitem[Griffiths and Ghahramani, 2011]{Griffiths:Ghahramani:2011}
Griffiths, T.~L. and Ghahramani, Z. (2011).
\newblock The {I}ndian buffet process: An introduction and review.
\newblock {\em Journal of Machine Learning Research}, 12:1185--1224.

\bibitem[Hjort, 1990]{Hjort:1990}
Hjort, N.~L. (1990).
\newblock Nonparametric {B}ayes estimators based on beta processes in models
  for life history data.
\newblock {\em Annals of Statistics}, pages 1259--1294.

\bibitem[Ishwaran and James, 2011]{Ishwaran:James:2001}
Ishwaran, H. and James, L.~F. (2011).
\newblock Gibbs sampling methods for stick-breaking priors.
\newblock {\em Journal of the American Statistical Association}.

\bibitem[Ishwaran and Zarepour, 2002]{Ishwaran:Zarepour:2002}
Ishwaran, H. and Zarepour, M. (2002).
\newblock Exact and approximate sum representations for the {D}irichlet
  process.
\newblock {\em Canadian Journal of Statistics}, 30(2):269--283.

\bibitem[Kingman, 1967]{Kingman:1967}
Kingman, J. F.~C. (1967).
\newblock Completely random measures.
\newblock {\em Pacific Journal of Mathematics}, 21(1):59--78.

\bibitem[Krizhevsky, 2009]{Krizhevsky:2009}
Krizhevsky, A. (2009).
\newblock Learning multiple layers of features from tiny images.

\bibitem[Miller and Harrison, 2013]{miller2013simple}
Miller, J.~W. and Harrison, M.~T. (2013).
\newblock A simple example of {D}irichlet process mixture inconsistency for the
  number of components.
\newblock In {\em Advances in Neural Information Processing Systems}, pages
  199--206.

\bibitem[Neal, 2000]{Neal:1998}
Neal, R.~M. (2000).
\newblock Markov chain sampling methods for {D}irichlet process mixture models.
\newblock {\em Journal of Computational and Graphical Statistics},
  9(2):249--265.

\bibitem[Paisley and Carin, 2009]{Paisley:Carin:2009}
Paisley, J. and Carin, L. (2009).
\newblock Nonparametric factor analysis with beta process priors.
\newblock In {\em Proceedings of the 26th Annual International Conference on
  Machine Learning}, pages 777--784. ACM.

\bibitem[Perman et~al., 1992]{perman1992size}
Perman, M., Pitman, J., and Yor, M. (1992).
\newblock Size-biased sampling of {P}oisson point processes and excursions.
\newblock {\em Probability Theory and Related Fields}, 92(1):21--39.

\bibitem[Pitman, 1996]{pitman1996}
Pitman, J. (1996).
\newblock Some developments of the {B}lackwell-{M}ac{Q}ueen urn scheme.
\newblock In {\em Statistics, Probability and Game Theory}, pages 245--267.
  Institute of Mathematical Statistics.

\bibitem[Pitman and Yor, 1997]{pitman1997two}
Pitman, J. and Yor, M. (1997).
\newblock The two-parameter {P}oisson-{D}irichlet distribution derived from a
  stable subordinator.
\newblock {\em Annals of Probability}, 25(2):855--900.

\bibitem[Smyth et~al., 2009]{Smyth:Welling:Asuncion:2009}
Smyth, P., Welling, M., and Asuncion, A.~U. (2009).
\newblock Asynchronous distributed learning of topic models.
\newblock In {\em Advances in Neural Information Processing Systems}, pages
  81--88.

\bibitem[Teh et~al., 2007]{Teh:Gorur:Ghahramani:2007}
Teh, Y.~W., G{\"o}r{\"u}r, D., and Ghahramani, Z. (2007).
\newblock Stick-breaking construction for the {I}ndian buffet process.
\newblock In {\em International Conference on Artificial Intelligence and
  Statistics}, pages 556--563.

\bibitem[Teh et~al., 2006]{Teh:Jordan:Beal:Blei:2006}
Teh, Y.-W., Jordan, M.~I., Beal, M.~J., and Blei, D.~M. (2006).
\newblock Hierarchical {D}irichlet processes.
\newblock {\em Journal of the American Statistical Association},
  101(476):1566--1581.

\bibitem[Thibaux and Jordan, 2007]{Thibaux:Jordan:2007}
Thibaux, R. and Jordan, M.~I. (2007).
\newblock Hierarchical beta processes and the {I}ndian buffet process.
\newblock In {\em International Conference on Artificial Intelligence and
  Statistics}, pages 564--571.

\bibitem[Titsias, 2008]{Titsias:2008}
Titsias, M.~K. (2008).
\newblock The infinite gamma-{P}oisson feature model.
\newblock In {\em Advances in Neural Information Processing Systems}, pages
  1513--1520.

\bibitem[Walker, 2007]{Walker:2007}
Walker, S.~G. (2007).
\newblock Sampling the {D}irichlet mixture model with slices.
\newblock {\em Communications in Statistics—Simulation and Computation},
  36(1):45--54.

\bibitem[Weyrauch et~al., 2004]{Weyrauch:Heisele:Huang:Blanz:2004}
Weyrauch, B., Heisele, B., Huang, J., and Blanz, V. (2004).
\newblock Component-based face recognition with {3D} morphable models.
\newblock In {\em 2004 Conference on Computer Vision and Pattern Recognition
  Workshop}, pages 85--85. IEEE.

\bibitem[Williamson et~al., 2013]{Williamson:Dubey:Xing:2013}
Williamson, S., Dubey, A., and Xing, E. (2013).
\newblock Parallel {M}arkov chain {M}onte {C}arlo for nonparametric mixture
  models.
\newblock In {\em Proceedings of the 30th International Conference on Machine
  Learning}, pages 98--106.

\bibitem[Yerebakan and Dundar, 2017]{Yerebakan:Dundar:2017}
Yerebakan, H.~Z. and Dundar, M. (2017).
\newblock Partially collapsed parallel {G}ibbs sampler for {D}irichlet process
  mixture models.
\newblock {\em Pattern Recognition Letters}, 90:22--27.

\bibitem[Zhou et~al., 2009]{Zhou:2009}
Zhou, M., Chen, H., Ren, L., Sapiro, G., Carin, L., and Paisley, J.~W. (2009).
\newblock Non-parametric {B}ayesian dictionary learning for sparse image
  representations.
\newblock In {\em Advances in Neural Information Processing Systems}, pages
  2295--2303.

\bibitem[Zhou et~al., 2012]{Zhou:Hannah:Dunson:Carin:2012}
Zhou, M., Hannah, L., Dunson, D., and Carin, L. (2012).
\newblock Beta-negative binomial process and poisson factor analysis.
\newblock In {\em Artificial Intelligence and Statistics}, pages 1462--1471.

\end{thebibliography}

\clearpage
\appendix
\onecolumn

\aistatstitle{Distributed, partially collapsed MCMC for Bayesian Nonparametrics: Supplement}


\section{Hybrid Sampler for the Pitman Yor Mixture Model (PYMM)\label{sec:pymm_supli}}
In this section, we expand upon Example 3 in Section~\ref{sec:hybrid}.

\textbf{Example 3: Pitman-Yor processes}
The Pitman-Yor process (\cite{perman1992size,pitman1997two}) is a distribution over probability measures, parametrized by a discount parameter $0\leq\sigma<1$, a concentration parameter $\alpha>-\sigma$, and a base measure $H$. While the Pitman-Yor process is not a normalized CRM, it can be derived from a $\sigma$‐stable CRM via a change of measure and normalization.  When the discount parameter is zero, we recover the Dirichlet process. As the discount parameter increases, we get increasingly heavy-tailed distributions over the atom sizes in the resulting probability measure.

\begin{lemma}
If $D \sim \mbox{PY}(\alpha, \sigma, H)$ with $\alpha>0$, and $Z_i \sim D$, then the posterior distribution $P(Y^*|Z_1,\dots, Z_n)$ is described by

\begin{equation}
    \begin{aligned}
    D^*_{\leq J} \sim& \mbox{PY}\left(\tilde{n} - J\sigma, 0, \frac{\sum_{k\leq j} (m_k-\sigma)\delta_{\theta_k}}{\tilde{n} -J\sigma}\right)\\  D^*_{>J}\sim& \mbox{PY}\Big(\alpha + n - \tilde{n} + J\sigma, \sigma, \\
    & \qquad \qquad \frac{(\alpha + Kd)H + \sum_{k>J}(m_k-\sigma)\delta_{\theta_k}}{\alpha+n-\tilde{n}+J\sigma}\Big)\\
    B \sim& \mbox{Beta}(\tilde{n}-J\sigma, \alpha + n - \tilde{n} +  J\sigma)\\
    D^* =& BD^*_{\leq J} +(1-B)D^*_{> J} 
    \end{aligned}
\end{equation}
where $K$ is the number of occupied clusters, $J\leq K$, and $\tilde{n} = \sum_{k=1}^Jm_k$.
\end{lemma}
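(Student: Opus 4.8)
The plan is to reduce the claim to the known posterior characterization of the Pitman--Yor process together with the aggregation and neutrality properties of the Dirichlet distribution, mirroring how the Dirichlet-process case above was reduced to the Dirichlet-marginal property. First I would recall the standard posterior: if $D\sim\mbox{PY}(\alpha,\sigma,H)$ with $H$ continuous and $Z_1,\dots,Z_n$ induce $K$ distinct values $\theta_1,\dots,\theta_K$ with multiplicities $m_1,\dots,m_K$, then
\[
D^*\stackrel{d}{=}\sum_{k=1}^K P_k\,\delta_{\theta_k}+P_*\,G',
\]
where $(P_1,\dots,P_K,P_*)\sim\mbox{Dirichlet}(m_1-\sigma,\dots,m_K-\sigma,\alpha+K\sigma)$ and $G'\sim\mbox{PY}(\alpha+K\sigma,\sigma,H)$ independently. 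This is the representation underlying the stable-mixture description cited in Example 3 (Lemma 22 of \citealt{pitman1997two}), which I would take as the starting point.

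Next I would peel off the first $J$ atoms by pure Dirichlet bookkeeping. Setting $B:=\sum_{k=1}^J P_k$, the aggregation property gives $B\sim\mbox{Beta}\bigl(\sum_{k\le J}(m_k-\sigma),\,\sum_{k>J}(m_k-\sigma)+\alpha+K\sigma\bigr)$; substituting $\tilde n=\sum_{k\le J}m_k$ simplifies the two parameters to $\tilde n-J\sigma$ and $\alpha+n-\tilde n+J\sigma$, which is the stated law of $B$. By the neutrality of the Dirichlet distribution the within-block normalized weights are independent of $B$, with $(P_1/B,\dots,P_J/B)\sim\mbox{Dirichlet}(m_1-\sigma,\dots,m_J-\sigma)$. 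Because a $\mbox{Dirichlet}(m_1-\sigma,\dots,m_J-\sigma)$ weight vector on fixed atoms is exactly a draw from a Dirichlet process, i.e.\ a $\mbox{PY}(\,\cdot\,,0,\,\cdot\,)$, with concentration $\tilde n-J\sigma$ and base $\sum_{k\le J}(m_k-\sigma)\delta_{\theta_k}/(\tilde n-J\sigma)$, the finite component $D^*_{\le J}=B^{-1}\sum_{k\le J}P_k\delta_{\theta_k}$ matches the statement.

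It then remains to identify the tail $D^*_{>J}=(1-B)^{-1}\bigl(\sum_{k>J}P_k\delta_{\theta_k}+P_*G'\bigr)$, whose normalized weights are $\mbox{Dirichlet}(m_{J+1}-\sigma,\dots,m_K-\sigma,\alpha+K\sigma)$ and whose fresh mass carries $G'\sim\mbox{PY}(\alpha+K\sigma,\sigma,H)$. I would recognize this as the posterior of $\mbox{PY}(\alpha+J\sigma,\sigma,H)$ given the $K-J$ tail clusters: reapplying the posterior formula to that prior produces a $\mbox{Dirichlet}(m_{J+1}-\sigma,\dots,m_K-\sigma,(\alpha+J\sigma)+(K-J)\sigma)$ weight vector and a fresh $\mbox{PY}((\alpha+J\sigma)+(K-J)\sigma,\sigma,H)$, and $(\alpha+J\sigma)+(K-J)\sigma=\alpha+K\sigma$ closes the loop. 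Written in the paper's convention this posterior has concentration $(\alpha+J\sigma)+(n-\tilde n)=\alpha+n-\tilde n+J\sigma$ and base proportional to $(\alpha+K\sigma)H+\sum_{k>J}(m_k-\sigma)\delta_{\theta_k}$, matching the displayed formula (with $d=\sigma$). Assembling $D^*=B\,D^*_{\le J}+(1-B)\,D^*_{>J}$ then finishes the argument.

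The genuine obstacle is this last identification. The bookkeeping that must line up is the \emph{extra} $J\sigma$ appearing in the fresh-process concentration: the tail's continuous mass is governed by $\alpha+K\sigma$, not by the $\alpha+(K-J)\sigma$ that a naive ``posterior given only the tail data'' would suggest, and this surplus is exactly what aggregating the full Dirichlet vector produces. Making this precise requires being careful that the atomic-plus-continuous base in the statement is read through the stable-mixture representation rather than as a literal stable CRM with atomic base, since a stable CRM with an atom in its base would stack infinitely many jumps at that location instead of the single Dirichlet-weighted atom we want. I would therefore carry out this final step entirely in terms of the posterior object supplied by the stable-mixture representation, so that the $\mbox{PY}(\,\cdot\,,\sigma,\,\cdot\,)$ notation with atomic base is understood as shorthand for that posterior, just as in the Dirichlet-process case.
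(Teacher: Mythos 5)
Your proof is correct, but it reaches the result by a different route than the paper. The paper's own proof is a two-line citation: it invokes Lemma 22 of \citet{pitman1997two} (the representation of $\mbox{PY}(\alpha,\sigma,H)$, $\alpha>0$, as a Dirichlet-process mixture of normalized $\sigma$-stable CRMs) together with Theorem 1 of \citet{Dubey:Williamson:Xing:2014}, which is a ready-made decomposition theorem for Pitman--Yor processes, and notes that the $J=K$ case is Corollary 20 of \citet{pitman1996}. You instead take that $J=K$ posterior characterization --- $D^* = \sum_{k\le K} P_k\delta_{\theta_k} + P_* G'$ with $(P_1,\dots,P_K,P_*)\sim\mbox{Dirichlet}(m_1-\sigma,\dots,m_K-\sigma,\alpha+K\sigma)$ and $G'\sim\mbox{PY}(\alpha+K\sigma,\sigma,H)$ --- as the starting point and derive the general-$J$ statement by hand, using Dirichlet aggregation to get the law of $B$, Dirichlet neutrality to get independence and the within-block laws, and a re-identification of the tail block as the posterior of $\mbox{PY}(\alpha+J\sigma,\sigma,H)$ given the tail data (checking that $(\alpha+J\sigma)+(K-J)\sigma=\alpha+K\sigma$). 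This buys self-containedness: you need only the posterior formula and elementary Dirichlet facts, rather than the black-box decomposition theorem, and you make explicit the bookkeeping (the surplus $J\sigma$ in the tail's concentration, the Beta parameters $\tilde n - J\sigma$ and $\alpha+n-\tilde n+J\sigma$) that the citation conceals. Your closing caveat --- that $\mbox{PY}(\cdot,\sigma,\cdot)$ with a partially atomic base must be read as shorthand for the posterior object in the stable-mixture sense, not as a literal normalized stable CRM with atoms in its base --- is exactly the right point of care, and is implicit but unstated in the paper's conventions. What the paper's route buys in exchange is brevity and modularity: the same Theorem 1 of \citet{Dubey:Williamson:Xing:2014} that proves the lemma is also what justifies their distributed Pitman--Yor sampler, so citing it keeps the argument aligned with the algorithmic machinery.
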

\begin{proof}
The proof is a direct consequence of Lemma 22 in \cite{pitman1997two} and Theorem 1 in \citet{Dubey:Williamson:Xing:2014}. The special case for $J=K$ is presented in Corollary 20 of \citet{pitman1996}.
\end{proof}

We note that the posterior atom weights $(\pi_1,\dots, \pi_J)$ for the finite component $D^*_{\leq J}$ are distributed according to $\mbox{Dirichlet}(m_1-\sigma,\dots, m_J-\sigma)$, and can easily be sampled as part of an uncollapsed sampler.  Conditioned on $\{\pi_k, \theta_k: k\leq J\}$ and $B^*$ we can sample the cluster allocation, $Z_i$ of point $X_i$ as
\begin{equation}P(Z_i=k|-) \propto \begin{cases} B^*\pi_k f(x_n;\theta_k) & k\leq J\\
    \frac{(1-B^*)(m_k-\sigma)}{n-\tilde{n} + \alpha + J\sigma }f_k(x_n) & J < k \leq K\\
    \frac{(1-B^*)(\alpha + K \sigma)}{n-\tilde{n} + \alpha + J\sigma}f_H(x_n) & k=K+1
  \end{cases}
  \label{eqn:py_hybrid}
\end{equation}
where $f(X_i;\theta_k)$ is the likelihood for each mixing component; $f_k(X_i) = \int_\Theta f(X_i;\theta)p(\theta|\{X_j: Z_j=k, j\neq i\})d\theta$ is the conditional probability of $x_i$ given other members of the $k$th cluster; and $f_H(x_i) = \int_\Theta f(x_i;\theta) H(d\theta)$. This procedure is summarized in Algorithm~\ref{algo:hybrid_pymm}

\begin{algorithm}[h]
\caption{Hybrid PYMM Sampler\label{algo:hybrid_pymm}}
\begin{algorithmic}[1]
\While{not converged}
\State Select $J$
\State Sample $B^* \sim \mbox{Beta}(\tilde{n} - J\sigma, n-\tilde{n} + \alpha+J\sigma)$
\State Sample $(\pi_1,\dots,\pi_J) \sim \mbox{Dir}(m_1-\sigma, \ldots, m_J-\sigma)$
\State For $k\leq J$, sample 
$$ \theta_k \sim p(\theta_k|H, \{X_i:Z_i=k\})$$
\State For each data point $X_n$ sample $Z_i$ according to Equation \ref{eqn:py_hybrid}
\EndWhile
\end{algorithmic}
\end{algorithm}

We can similarly derive the distributed sampler for PYMM shown in algorithm \ref{algo:distributed_pymm}

\begin{algorithm}
\caption{Distributed PYMM Sampler\label{algo:distributed_pymm}}
\begin{algorithmic}[1]
\Procedure{Local}{$\{X_i, Z_i\}$} 

\Comment{Global variables $J, P^*,\{\theta_k, \pi_k\}_{k=1}^J, B^*$}
\If {$Processor = P*$}
\State Sample $Z_i$ according to \eqref{eqn:py_hybrid}
\Else
\State $P(Z_i = k) \propto \pi_k f(X_i;\theta_k)$
\EndIf
\EndProcedure

\Procedure{Global}{$\{X_i, Z_i\}$}
\State Gather cluster counts $m_k$ and parameter sufficient statistics $\Psi_k$ from all processors.
\State Let $J$ be the number of instantiated clusters.
\State Sample $B^* \sim \mbox{Beta}(n - J\sigma, \alpha+ J\sigma)$
\State Sample $(\pi_1, \dots, \pi_J) \sim \mbox{Dir}(m_1-\sigma, \ldots, m_J-\sigma)$
\State For $k: m_k>0$, sample 
$$ \theta_k \sim p(\theta_k|\Psi_k, H)$$
\State Sample $P^* \sim \mbox{Uniform}(1, \ldots, P)$
\EndProcedure
\end{algorithmic}
\end{algorithm}

\section{Hybrid Sampler for Hierarchical Dirichlet Processes\label{sec:hdp_supli}}
In this section, we expand upon Example 4 in Section~\ref{sec:hybrid}.

\textbf{Example 4: Hierarchical Dirichlet Process.} Hierarchical Dirichlet processes \citep[HDPs,][]{Teh:Jordan:Beal:Blei:2006} extend the DP to model grouped
data. The Hierarchical Dirichlet Process is a distribution over probability distributions $D_s,
s=1,\dots,S$, each of which is conditionally distributed according to
a DP\@. These distributions are coupled using a discrete common
base-measure $H$, itself distributed according to a DP\@. Each
distribution $D_s$ can be used to model a collection of observations
$\mathbf{x}_s := \{x_{si}\}_{i=1}^{N_s}$, where
\begin{equation}
\begin{array}{cc}
H\sim\mbox{DP}(\alpha, D_0),&
D_s\sim\mbox{DP}(\gamma,H),\\
\theta_{si} \sim D_s, &
x_{si}\sim f(\theta_{si}),
\end{array}\label{eq:HDP}
\end{equation}
for $s=1,\dots, S$ and $i=1,\dots, N_s$.

We consider a Chinese Restaurant Franchise \citep[CRF,][]{Teh:Jordan:Beal:Blei:2006} representation of the HDP, where each data point is represented by a customer, each atom in $D_s$ is represented by a table, and each atom location in the support of $H$ is represented by a dish. Let $x_{si}$ represent the $i$th customer in the $s$th restaurant; let $t_{si}$ be the table assignment of customer $x_{si}$; let $k_{st}$ be the dish assigned to table $t$ in restaurant $s$.  Let $m_{sk}$ denote the number of tables in restaurant $s$ serving disk $k$, and $n_{stk}$ denote the number of customers in restaurant $s$ at table $t$ having dish $k$. 

\begin{lemma}
Conditioned on the table/dish assignment counts $m_{\cdot k} = \sum_s m_{sk}$, the posterior distribution $P(H^*|\{t_{si}\}, \{k_{st}\}$ can be written as
$$
H^* = B^* H^*_{\leq J} + (1-B^*)H^*_{>J}$$
where
\begin{equation*}
    \begin{aligned}
 H^*_{\leq J}|m_{\cdot 1}, \dots, m_{\cdot K} \sim& \mbox{DP}\left(\tilde{m}, \frac{\sum_{k\leq J}m_{\cdot k}\delta_{\phi_k}}{\tilde{m}}\right)\\
    H^*_{> J}|m_{\cdot 1}, \dots, m_{\cdot K} \sim& \mbox{DP}\left(\alpha + m - \tilde{m}, \frac{\alpha H + \sum_{k> J}m_{\cdot k}\delta_{\phi_k}}{\alpha + m - \tilde{m}}\right)\\
     B^* \sim& \mbox{Beta}(\tilde{m}, m-\tilde{m}+\alpha), 
    \end{aligned}
\end{equation*}
where $K$ is the total number of instantiated dishes; $J\leq K$; $m=\sum_{k=1}^K m_{\cdot k}$; and $\tilde{m} = \sum_{k=1}^J m_{\cdot k}$
\end{lemma}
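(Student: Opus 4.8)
The plan is to reduce this statement to the Dirichlet process decomposition already established in Lemma~\ref{lem:DP}, exploiting the Chinese Restaurant Franchise structure of the HDP. The key observation is that, in the CRF representation of the generative model \eqref{eq:HDP}, the top-level measure $H$ is itself a draw from a Dirichlet process, $H\sim\mbox{DP}(\alpha, D_0)$, and the dish served at each table is an i.i.d.\ draw from $H$. Thus, for the purpose of the top-level posterior, the $m=\sum_k m_{\cdot k}$ tables play exactly the role that the $n$ observations play for an ordinary DP mixture: each table is a ``customer'' of the top-level DP whose value is the dish location $\phi_k$, and the total number of tables assigned the dish at location $\phi_k$ is $m_{\cdot k}$.

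First I would verify that, conditioned on the table and dish assignments $\{t_{si}\}$ and $\{k_{st}\}$, the posterior of $H$ depends on these assignments only through the aggregate dish counts $m_{\cdot k}$ and the distinct dish locations $\phi_k$. This follows from the conditional independence structure of the CRF: given the dish labels $\{k_{st}\}$, the lower-level table and customer assignments within each restaurant carry no further information about $H$, since the only coupling between the restaurant-level measures $D_s$ and the top-level measure $H$ is through the dishes that the tables draw from $H$. These dish draws form an exchangeable collection of i.i.d.\ samples from $H$, so Dirichlet process conjugacy applies verbatim at the top level.

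Next I would apply standard DP conjugacy to obtain
\begin{equation*}
H^*\mid\{t_{si}\},\{k_{st}\}\sim\mbox{DP}\!\left(\alpha+m,\;\frac{\alpha D_0+\sum_{k}m_{\cdot k}\delta_{\phi_k}}{\alpha+m}\right),
\end{equation*}
the familiar posterior update in which the table counts $m_{\cdot k}$ are the sufficient statistics (the base measure $D_0$ is written $H$ in the statement above). This posterior is structurally identical to the one appearing in Lemma~\ref{lem:DP}, under the relabeling $n\mapsto m$, $m_k\mapsto m_{\cdot k}$, and $\theta_k\mapsto\phi_k$. Invoking Lemma~\ref{lem:DP} with these substitutions then immediately yields the claimed decomposition $H^*=B^*H^*_{\leq J}+(1-B^*)H^*_{>J}$, with the stated parameters for $H^*_{\leq J}$ and $H^*_{>J}$, and with $B^*\sim\mbox{Beta}(\tilde{m},m-\tilde{m}+\alpha)$ where $\tilde{m}=\sum_{k\leq J}m_{\cdot k}$.

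The main obstacle is the first step: carefully justifying that the top-level posterior collapses to a single DP conjugacy update driven solely by the counts $m_{\cdot k}$. This requires being precise about exactly what is conditioned on in the CRF, and in particular arguing that the assignment of tables to dishes is precisely the ``data'' that the top-level DP sees, so that no residual dependence on the restaurant-level structure survives. Once this reduction is established, the remainder is a purely mechanical re-labeling of Lemma~\ref{lem:DP}, since the aggregation and self-similarity properties of the Dirichlet process that underlie that lemma transfer without change.
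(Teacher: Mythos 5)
Your proof is correct and takes essentially the same route as the paper, whose entire proof reads that the result ``is a direct extension of Lemma~\ref{lem:DP}, applied to the top-level Dirichlet process $H$.'' Your elaboration---identifying the $m$ tables as the observations seen by the top-level DP, so that the counts $m_{\cdot k}$ are the sufficient statistics for the conjugate update, and then invoking Lemma~\ref{lem:DP} under the relabeling $n\mapsto m$, $m_k\mapsto m_{\cdot k}$, $\theta_k\mapsto\phi_k$---is exactly the reduction the paper leaves implicit.
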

\begin{proof}
This is a direct extension of Lemma~\ref{lem:DP}, applied to the top-level Dirichlet process $H$.
\end{proof}

We can therefore construct a hybrid sampler, where $H^*_{\leq J}:=\sum_{k=1}^J \beta_k \delta_{\phi_k}$ is represented via $(\beta_1, \dots, \beta_J)\sim \mbox{Dir}(m_{\cdot 1}, \dots, m_{\cdot J})$ and corresponding $\phi_k \sim h(\phi_k) \prod_{s,i: k_{st_{si}} = k}f(x_{si} | \phi_k) $, and $H^*_{> J}$ is represented using a Chinese restaurant process. We can then sample the table allocations according to

\begin{equation}
\begin{aligned}
   t_{si} = t | t_{-si}, \{k_{st}\}, - \propto 
   &\begin{cases} 
   \frac{n_{st.}^{-si}}{  n_{s..}^{-si} + \gamma }f(x_{si};\phi_{k_{st}}) & \mbox{if } t \mbox{ previously used and } k_{st} \leq J  \\
   B^* \frac{\gamma}{ n_{s..}^{-si} + \gamma } \beta_{k_{st}} f(x_{si} ; \phi_{k_{st}}) &  t= \mbox{  new table table and }  k_{st} \leq J \\
   (1-B^*) \frac{\gamma}{ n_{s..}^{-si} + \gamma } \frac{m_{.k_{st}}}{\sum_{k: k > J}m_{.k} + \alpha} f(x_{si} |t_{-si},, - ;D_0) & \mbox{if } t \mbox{ new and } k_{st} > J \\ 
   (1-B^*)\frac{\gamma}{ n_{s..}^{-si} + \gamma } \frac{\alpha}{\sum_{k: k > J}m_{.k} + \alpha}  f(x_{si}; D_0)  & t \mbox{ new and } k = K + 1
  \end{cases}
  \end{aligned}\label{eqn:hdp_infinite}
\end{equation}

and sample each table according to

\begin{equation}
\begin{aligned}
    & p(k_{st} = k |\{t_{si}\}, k_{-st}) \propto
    &\begin{cases}
    B^*\beta_k p(\{ X_{si} : t_{si} = t\}|\phi_k) &k \leq J \\
    (1-B^*) \frac{m_{.k}}{\sum_{k>J} m_{.k} + \alpha} p(\{ X_{si} : t_{si} = t\}|D_0, \mathbf{X}) &J< k \leq K  \\
    (1-B^*) \frac{\alpha}{\sum_{k>J} m_{.k} + \alpha} p(\{ X_{si} : t_{si} = t\}|D_0) &  k = K + 1
    \end{cases}
\end{aligned}
\label{eqn:hdp_dish}
\end{equation}

We summarize the hybrid sampler shown in Algorithm~\ref{algo:hybrid_hdpmm}

\begin{algorithm}
\caption{Hybrid HDP Sampler\label{algo:hybrid_hdpmm}}
\begin{algorithmic}[1]
\While{not converged}
\State Select $J$
\State Sample $B^* \sim \mbox{Beta}(\tilde{m}, m-\tilde{m} + \alpha)$
\State Sample $(\beta_1,\dots,\beta_J) \sim \mbox{Dir}(m_{.1}, \ldots, m_{.J})$
\State For $k\leq J$, sample 
$$ \phi_k \sim p(\phi_k|D_0, \{X_{si}:k_{st_{si}}=k\})$$
\State For each data point $X_{si}$ sample $t_{si}$ according to Equation \ref{eqn:hdp_infinite}
\State Sample the dish $k_{st}$ according to Equation~\ref{eqn:hdp_dish}

\EndWhile
\end{algorithmic}
\end{algorithm}

If we ensure that data associated with each ``restaurant'' or group lies on the same table, we can extend this hybrid algorithm to a distributed setting, as described in Algorithm~\ref{algo:distributed_hdpmm}.

\begin{algorithm}
\caption{Distributed HDP Sampler\label{algo:distributed_hdpmm}}
\begin{algorithmic}[1]
\Procedure{Local}{$\{X_{si}, t_{si}, k_{st}\}$} 

\Comment{Global variables $J, P^*,\{\phi_k, \beta_k\}_{k=1}^J, B^*$}
\If {$Processor = P*$}
\State Sample $t_{si}$ according to \eqref{eqn:hdp_infinite}
\State Sample $k_{st}$ according to \eqref{eqn:hdp_dish}
\Else
\State Sample tables according to
\begin{equation}
    P(t_{si} = t | n_{st.}^{-si}) \propto
    \begin{cases}
    n_{st.}^{-si}f(x_{si} ; \phi_{k_{st}}) & \mbox{ if } t \mbox{ occupied} \\
    \gamma \beta_{k_{st}} f(x_{si} ; \phi_{k_{st}}) & \mbox{if } t \mbox{ is new}
    \end{cases}
\end{equation}
\State Sample dishes according to 
\begin{equation}
    P(k_{st} = k | \beta_k, \{t_{si}\}, X) \propto
    \beta_k p(\{ X_{si}: t_{si} = t \} | \phi_k )
\end{equation}
\EndIf
\EndProcedure

\Procedure{Global}{$\{X_{si}, t_{si}, k_{st}\}$}
\State Gather cluster counts $m_k$ and parameter sufficient statistics $\Psi_k$ from all processors.
\State Sample $B^* \sim \mbox{Beta}(\tilde{m} , \alpha + m - \tilde{m})$
\State Let $J$ be the number of instantiated clusters.
\State Sample $(\beta_1,\dots,\beta_J) \sim \mbox{Dir}(m_{.1}, \ldots, m_{.J})$
\State For $k\leq J$, sample 
$$ \phi_k \sim p(\phi_k|D_0, \{X_{si}:k_{st_{si}}=k\})$$
\State Sample $P^* \sim \mbox{Uniform}(1, \ldots, P)$
\EndProcedure
\end{algorithmic}
\end{algorithm}





\section{Further IBP Empirical Results}

For the ``Cambridge'' data sets described in the main paper, we generated images based on a superposition of the four features in the top row of Figure~\ref{ibp_features}, and then flattened the image to create a 36-dimensional vector. The bottom row of Figure~\ref{ibp_features} shows some sample data points.

In addition to synthetic data, we also evaluated the distributed beta-Bernoulli process sampler to a real-world data set, the CBCL MIT face dataset \citep{Weyrauch:Heisele:Huang:Blanz:2004}. This data set consists of 2,429 images of $19 \times 19$ dimensional faces. We distributed the data across 32 processors and ran the sampler in parallel for $500$ iterations. Figure~\ref{fig:face_LL} shows the test set log likelihood of the sampler over time and Figure~\ref{fig:face_features} shows the features learned by the hybrid IBP method and we can clearly see that our method can discover the underlying facial features in this data set.
\begin{figure}[hbtp]
\begin{center}
\includegraphics[scale=.5]{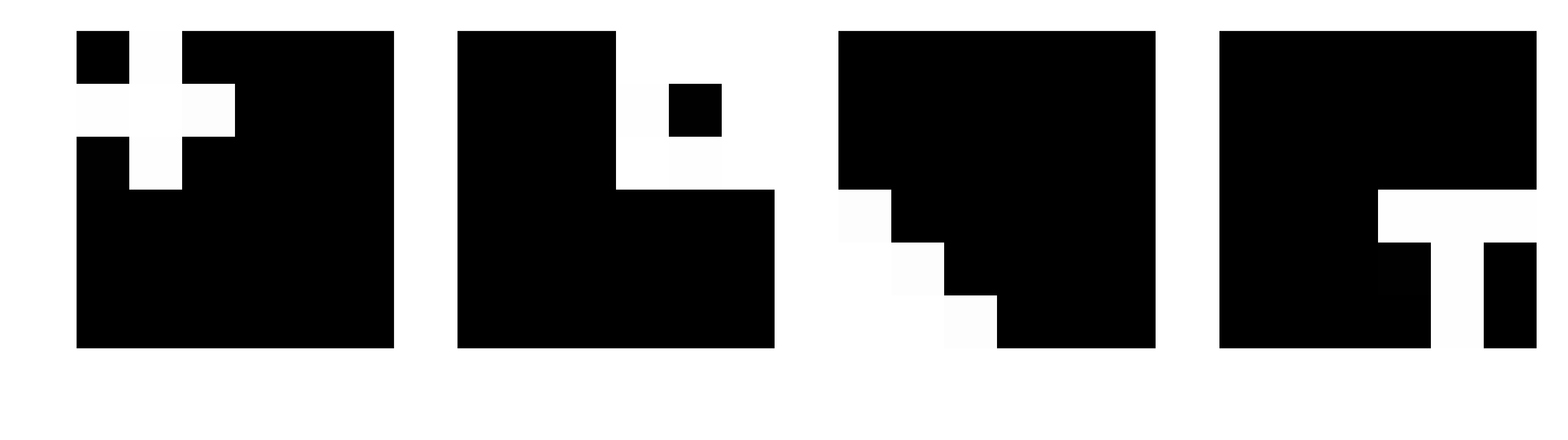}\\
\includegraphics[scale=.5]{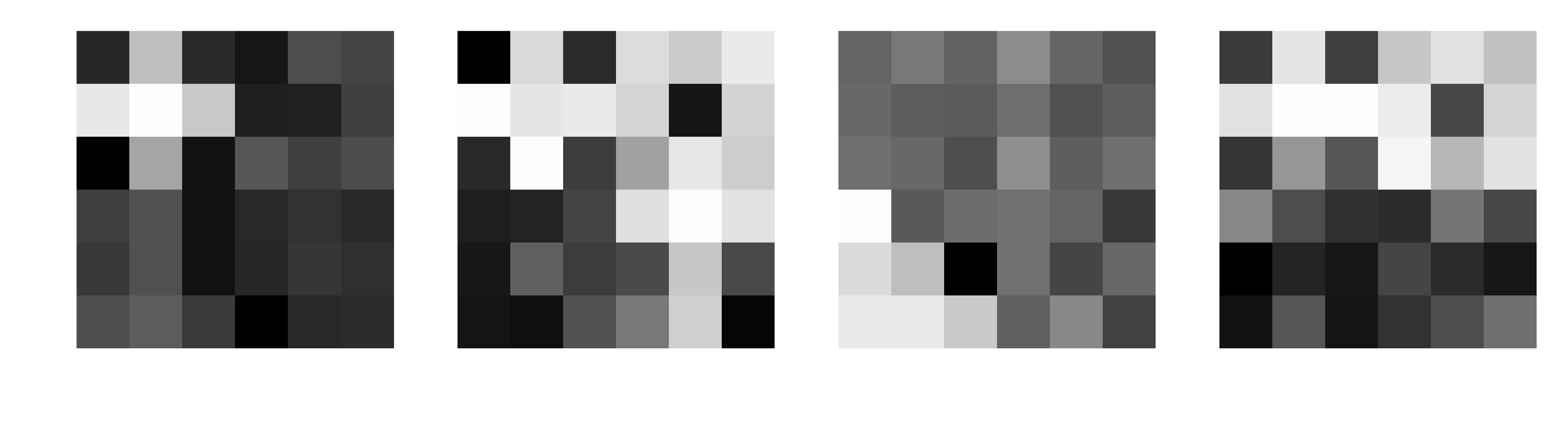}\\
\end{center}
\caption{Top: The true features present in the synthetic data set. Bottom: Examples of observations in the synthetic data set. }
\label{ibp_features}
\end{figure}

\begin{figure}
    \centering
    \includegraphics[width=.5\linewidth]{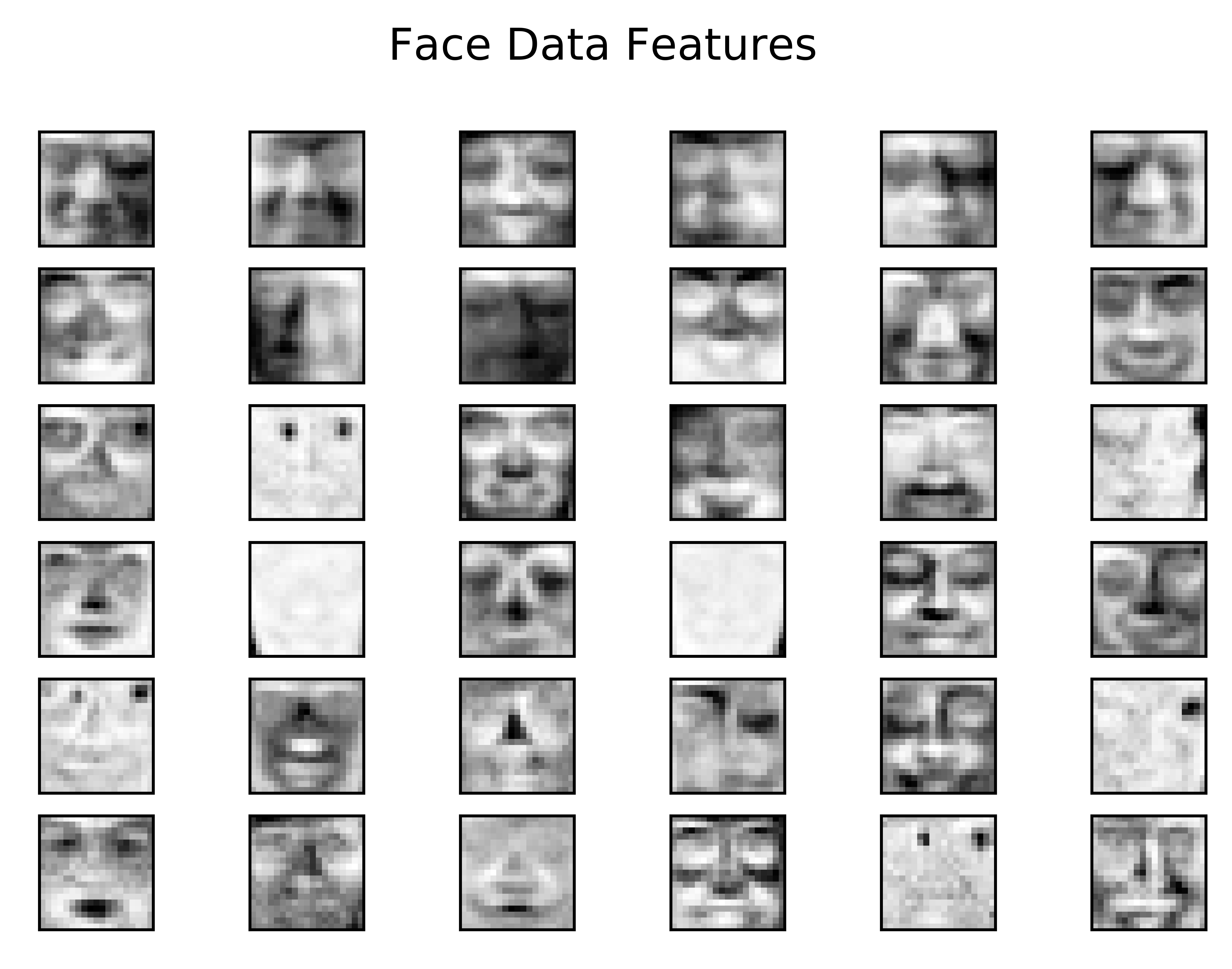}
    \caption{Learned features from the hybrid IBP for the CBCL face data set.}
    \label{fig:face_features}
\end{figure}

\begin{figure}
    \centering
    \includegraphics[width=.5\linewidth]{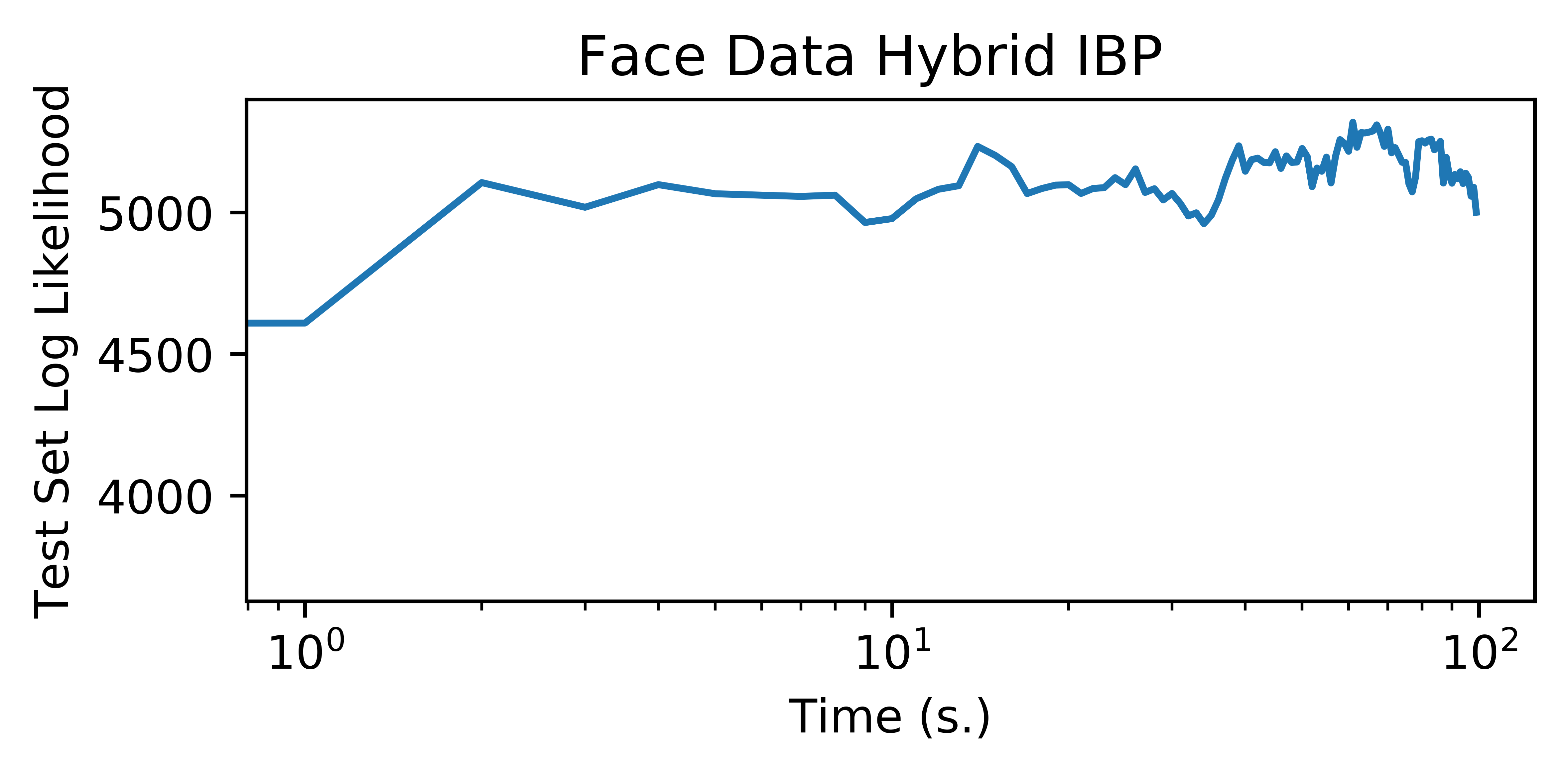}
    \caption{Test set log-likelihood trace plot for CBCL face data set.}\label{fig:face_LL}
\end{figure}

\end{document}